\documentclass[journal]{IEEEtran}

\usepackage{multirow}%
\usepackage{booktabs}%
\usepackage{makecell}
\usepackage{pifont}
\usepackage[table]{xcolor}  
\definecolor{lightgray}{gray}{0.9}
\usepackage{algorithmic}
\usepackage{algorithm}
\usepackage{url}
\usepackage{hyperref}
\usepackage{enumitem}
\usepackage{graphicx}
\usepackage{newtxtext}
\usepackage[caption=false,font=footnotesize]{subfig}

\usepackage{amsmath,amsfonts,bm}
\usepackage{amssymb}
\usepackage{mathtools}
\usepackage{amsthm}

\newcommand{\T}{{\hspace{-0.25ex}\top\hspace{-0.25ex}}}

\newcommand{\dif}{\mathrm{d}}

\newcommand{\bE}{\mathbb{E}}
\newcommand{\bR}{\mathbb{R}}

\newcommand{\cX}{\mathcal{X}}
\newcommand{\cY}{\mathcal{Y}}

\newcommand{\cH}{\mathcal{H}}
\newcommand{\cS}{\mathcal{S}}
\newcommand{\calF}{{\mathcal F}}
\newcommand{\calP}{{\mathcal P}}
\newcommand{\calW}{{\mathcal W}}
\newcommand{\calZ}{{\mathcal Z}}
\newcommand{\cL}{\mathcal{L}}
\newcommand{\cD}{\mathcal{D}}

\newcommand{\bx}{\boldsymbol{x}}
\newcommand{\btheta}{\theta}
\newcommand{\bz}{\boldsymbol{z}}
\newcommand{\bw}{\boldsymbol{w}}
\newcommand{\bK}{\boldsymbol{K}}
\newcommand{\bk}{\boldsymbol{k}}

\newcommand{\ptr}{p_\mathrm{tr}}
\newcommand{\pte}{p_\mathrm{te}}
\newcommand{\tr}{\mathrm{tr}}
\newcommand{\te}{\mathrm{te}}
\newcommand{\val}{\mathrm{v}}
\newcommand{\mf}{\boldsymbol{f}_\theta}

\newcommand{\tnabla}{\tilde{\nabla}_{\boldsymbol{\theta}}}
\newcommand{\tgrad}{\tilde{\nabla}_{s}}
\newcommand{\grad}{\nabla_{s}}
\newcommand{\bbeta}{{\boldsymbol{\beta}}}
\newcommand{\mathbbR}{\mathbb{R}}

\newcommand{\Str}{\cS^\tr}
\newcommand{\Sval}{\cS^\val}

\newcommand{\Ltr}{\cL^\tr}
\newcommand{\Lval}{\cL^\val}

\newcommand{\nparams}{b}

\theoremstyle{plain}
\newtheorem{theorem}{Theorem}[section]

\theoremstyle{definition}

\newtheorem{assumption}[theorem]{Assumption}
\theoremstyle{remark}

\usepackage[numbers,sort&compress]{natbib}

\begin{document}


\title{Accelerated Dynamic Importance Weighting with Versatile Divergence-Minimizing Estimators}





\author{
Tongtong Fang$^1$,
Nan Lu$^{2,3}$,
Gang Niu$^3$,
Kenji Fukumizu$^1$,
Masashi Sugiyama$^{3,4}$
\thanks{
$^1$ The Institute of Statistical Mathematics, Tokyo, Japan.
}
\thanks{
$^2$ University of Bristol, Bristol, U.K.
}
\thanks{
$^3$ RIKEN Center for Advanced Intelligence Project, Tokyo, Japan.
}
\thanks{
$^4$ The University of Tokyo, Tokyo, Japan.
}
}


\markboth{Journal of \LaTeX\ Class Files,~Vol.~14, No.~8, August~2021}%
{Shell \MakeLowercase{\textit{et al.}}: A Sample Article Using IEEEtran.cls for IEEE Journals}


\maketitle

\begin{abstract}
\emph{Importance weighting} (IW) is a golden solver for \emph{joint distribution shift}, where the joint distributions differ between the training and test data.
To solve this problem, IW estimates test-to-training density ratios as importance weights and reweights the training losses accordingly.
Recent advances in \emph{dynamic IW} (DIW) integrate weight estimation into model training, enabling scalable IW for deep models and achieving strong performance on large modern datasets. 
Despite its promise, DIW remains limited in two aspects.
First, it incurs substantial computational overhead by solving a \emph{kernel mean matching}~(KMM)-induced optimization problem to convergence in every mini-batch.
Second, it relies solely on KMM for weight estimation, whereas the IW literature contains diverse estimation methods based on different divergence measures.
In this paper, we propose \emph{accelerated DIW}~(ADIW), a unified and efficient IW framework for deep learning under joint distribution shift.
ADIW performs a few lightweight \emph{projected gradient descent} updates that warm-start from previously updated weights, substantially improving efficiency.
Moreover, ADIW generalizes DIW into a unified divergence-minimization framework that supports diverse weight-estimation methods in a plug-and-play manner, including those based on the \emph{Kullback--Leibler divergence}, \emph{squared distance}, and \emph{Wasserstein-1 distance}.
We establish convergence guarantees for ADIW under mild conditions, and empirical results demonstrate that ADIW achieves state-of-the-art 
IW performance while being substantially more efficient.

\end{abstract}

\begin{IEEEkeywords}
Distribution Shift, Importance Weighting, Divergence Minimization, Deep Learning.
\end{IEEEkeywords}

\section{Introduction}
\label{sec:intro}
\IEEEPARstart{D}{\emph{istribution}} \emph{shift} \citep{quionero2009dataset, pan2009survey}
poses a fundamental challenge for deep learning, as models trained on one distribution often fail to generalize to test data drawn from a different distribution.
{
The most general form of distribution shift is \emph{joint distribution shift}, where the joint densities of the training and test data are different without additional assumptions on the shift structure.}
A classic approach to {joint} distribution shift is \emph{importance weighting}~(IW) \citep{shimodaira2000improving,sugiyama2007covariate}, 
{which, in the classification setting considered in this paper, consists of two steps:}
(i) \emph{weight estimation}~{(WE)}, where test-to-training density ratios are estimated using a tiny validation set from the test distribution;
and (ii) \emph{weighted classification}~{(WC)}, where training losses are reweighted accordingly.

\begin{figure*}[t]
    \centering
    \subfloat[Illustrations of DIW (left) and ADIW (right).]{
        \includegraphics[width=0.71\linewidth]{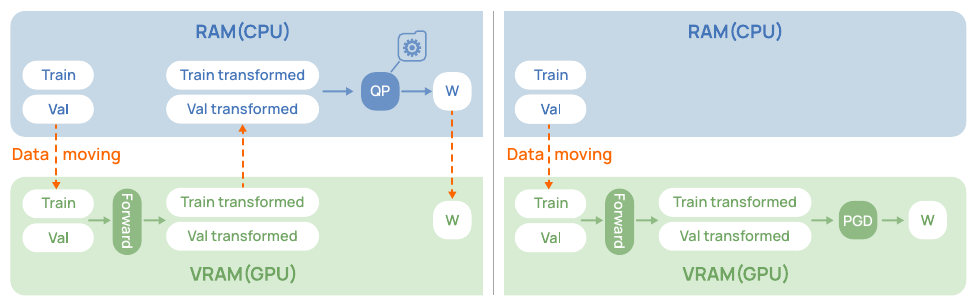}
        \label{fig:pipeline}
        }
        \hfill
    \subfloat[Training time.]{
        \centering
        \includegraphics[width=0.255\linewidth]{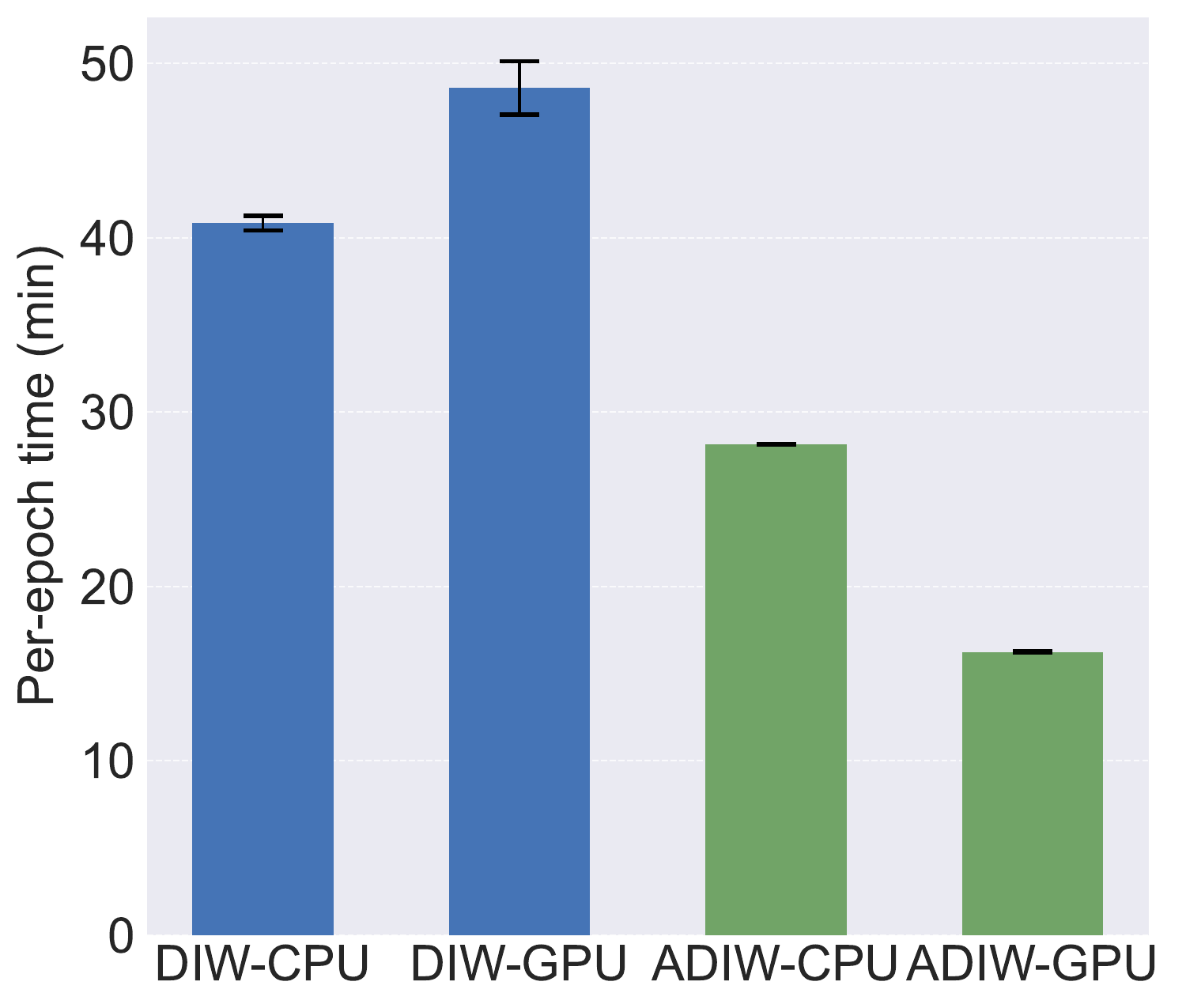}
        \label{fig:time}}

    \vspace{4pt}

    \footnotesize
    \raggedright
    (a) In DIW, weight estimation (WE) is implemented as a quadratic program (QP) solved until convergence in each mini-batch on the CPU using an external solver, incurring overhead from CPU-GPU data transfers and over-solving WE at each iteration.
    In contrast, ADIW optimizes the weights via a few \emph{projected gradient descent} (PGD) updates, enabling efficient, fully GPU-accelerated end-to-end training without external QP solvers.
    (b) Per-epoch end-to-end training time on ImageNet-1K with 0.4 symmetric label noise (minutes). Error bars show standard deviation over 3 runs. DIW-CPU solves WE with CVXOPT on the CPU; DIW-GPU uses cuOSQP on GPU but involves CPU-side preprocessing and additional CPU-GPU data transfers; ADIW-CPU and ADIW-GPU perform PGD-based WE on CPU and fully on GPU, respectively. All methods employ kernel mean matching for WE.

    \vspace{1pt}
    \caption{Comparison of DIW and the proposed ADIW.}
    \label{fig:overview}
\end{figure*}
While effective for low-dimensional data and linear models, 
IW {often} degrades on high-dimensional and complex data and deep models \citep{byrd2018effect}.
To address this limitation, \emph{dynamic IW}~(DIW) \citep{fang2020rethinking} 
introduced a \emph{feature extractor} from the deep {model}
and performed WE on {feature representations}
{in every mini-batch}. 
Despite its effectiveness on modern datasets, DIW remains limited in computational efficiency and {the coverage of WE methods,}
as detailed below.

\textbf{Computational Limitations.}
{
DIW has two major computational bottlenecks.
(i) \emph{CPU-GPU data transfer overhead.} 
{DIW relies on quadratic programming (QP) solvers for \emph{kernel mean matching}~(KMM) \citep{huang2007correcting} in WE}, which are typically CPU-centric and thus incur frequent CPU-GPU data transfers (see Figure~\ref{fig:overview}(\subref{fig:pipeline}); CPU and GPU denote the central and graphics processing units, respectively).
Most mainstream QP solvers (e.g., CVX, CVXOPT, CPLEX, and Gurobi)~\footnote{See \url{https://cvxr.com/cvx/} for CVX, \url{https://cvxopt.org} for CVXOPT, \url{https://www.ibm.com/products/ilog-cplex-optimization-studio} for CPLEX, and \url{https://www.gurobi.com} for Gurobi.} offer limited or no GPU support, and even GPU-enabled solvers such as cuOSQP \citep{cuosqp} typically require CPU-side inputs for problem setup and thus still have CPU-GPU data transfers (see Figure~\ref{fig:overview}(\subref{fig:time})). 
Moreover, even if QP-based WE could be performed entirely on GPU, not all WE objectives are QP problems (e.g., \emph{Kullback--Leibler (KL) importance estimation procedure}~(KLIEP) \citep{sugiyama2008direct}), making a unified implementation difficult. 
(ii) \emph{Over-solving WE.}
In DIW, a KMM problem is solved until convergence in each mini-batch,
which is unnecessary since the representations used for WE change consecutively with model update.}

\textbf{Coverage Limitations.}
Classic two-step IW methods are each built upon a specific \emph{divergence} {in the WE step}, such as \emph{maximum mean discrepancy}~(MMD) in KMM, \emph{KL divergence} in KLIEP, and \emph{squared distance} in \emph{least-squares importance fitting} (LSIF)~\citep{kanamori2009least}.
{The choice of divergence determines how distribution shift is measured and affects both WE and WC through the estimated weights.
This motivates a unified framework that supports diverse WE methods based on different divergences within a single implementation, enabling systematic comparison and selection.}
More broadly, Sugiyama et al.~\cite{sugiyama2012density1} unified many IW methods via \emph{Bregman divergence} \citep{bregman1967relaxation} minimization, but
it is restricted to the Bregman family, does not provide a unified implementation pipeline for different methods, and does not readily support end-to-end deep learning.
Although DIW is well suited to deep learning, it remains tightly coupled to a single {WE} method, i.e., KMM.
Recent extensions to alternative {WE} methods are either application-specific (i.e., language model alignment \citep{lodkaewimportance}) or setting-specific (i.e., positive-unlabeled learning \citep{kumagai2025importance}), while still inheriting the computational inefficiency of DIW.

In summary, there is still no principled and general {IW} framework that can flexibly integrate diverse {WE} methods into deep learning pipelines in a computationally efficient manner.


To address these challenges, we propose \emph{accelerated DIW}~(ADIW), 
a unified and efficient IW framework for deep learning (see Figure~\ref{fig:overview}(\subref{fig:pipeline})).
ADIW formulates WE as density matching between transformed training and test distributions, where the transformation is induced by a feature extractor from the deep model, and density matching is done by minimizing a specified divergence measure.
Instead of solving WE until convergence on CPUs, ADIW performs WE using a small number of lightweight \emph{projected gradient descent} (PGD) \citep{polyak1966constrained} updates (typically a single step) directly on GPUs, {with each update warm-started from the previously updated weights.}
This design removes the dependency on external QP solvers and improves efficiency by avoiding unnecessary CPU-GPU data transfers.
Moreover, ADIW supports versatile {WE} methods in a plug-and-play manner, enabling seamless switching and fair comparison within a unified pipeline, which is difficult to achieve with existing heterogeneous implementations.

Overall, ADIW offers a unified and computationally efficient IW framework for deep learning {under joint distribution shift}.
Our contributions can be summarized as follows:
\begin{itemize}
    \item \textbf{Coverage.} 
    {
    We propose ADIW, a unified IW framework that supports a variety of divergences used in classic IW (including the aforementioned MMD, KL divergence, and squared distance) within an end-to-end deep learning pipeline in a plug-and-play manner.}
    \item \textbf{Algorithmic.}  
    We analyze the computational bottleneck of DIW, develop an \emph{efficient} PGD-based implementation of ADIW,
    and prove convergence of the ADIW algorithm to a stationary point of the WC objective. 
    \item \textbf{Practical.} We instantiate several classic IW methods within ADIW, introduce a \emph{novel} Wasserstein-1 distance variant, and provide systematic empirical comparisons.
    \item \textbf{Empirical.} We demonstrate that ADIW achieves comparable or better performance than DIW while substantially reducing training time (see Figure~\ref{fig:overview}(\subref{fig:time})).
\end{itemize}

{
The remainder of this paper is organized as follows.
Section~\ref{sec:background} introduces the problem setting and reviews IW and DIW.
Section~\ref{sec:ADIW} presents ADIW, followed by instantiations of IW methods in Section~\ref{sec:examples}.
Experimental results are reported in Section~\ref{sec:ADIW_exp}.
Additional details on the convergence analysis, discussions, and experiments are provided in the appendices.}

\section{Background}
\label{sec:background}
This section provides the necessary background for developing the proposed ADIW framework. 
First, we introduce the problem setting and then review classic IW and its deep learning extension, DIW.

\subsection{Problem Setting}
Let $\bx$ and $y$ denote the input and output random variables, respectively.
In this work, we consider the problem of \emph{joint distribution shift}, where the joint densities of the training and test data are different, i.e., $\ptr(\bx,y)\neq\pte(\bx,y)$.

Let $\cX\subset\mathbbR^d$ be the input domain and $\cY=\{1,\ldots,C\}$ be the label set for a $C$-class classification problem, where $d$ denotes the input dimension.
The objective is to minimize the classification risk:
\begin{align}\textstyle
\label{eq:risk}%
R(\mf) = \bE_{p_\te}[\ell(\mf(\bx),y)],
\end{align}
where $\mf:\cX\to\bR^C$ is a classifier to be trained parameterized by $\theta$, $\ell:\bR^C\times\cY\to(0,+\infty)$ is a surrogate loss function for training $\mf$, such as the \emph{softmax cross-entropy loss}, and $\bE_{p_\te}$ is the expectation over ${\pte(\bx,y)}$.

In this setting, we are given a set of training data $\cD_\tr=\{(\bx_i^\tr,y_i^\tr)\}_{i=1}^{n_\tr}\overset{\text{i.i.d.}}\sim\ptr(\bx,y)$
and a small validation set 
$\cD_\val=\{(\bx_i^\val,y_i^\val)\}_{i=1}^{n_\val}\overset{\text{i.i.d.}}\sim\pte(\bx,y)$, where $n_\tr$ and $n_\val$ are the sample sizes for $\cD_\tr$ and $\cD_\val$, and typically $n_\tr \gg n_\val$.
The goal is to reliably estimate the risk using $\cD_\tr$ and $\cD_\val$, and to train $\mf$ by minimizing a certain empirical risk on $\cD_\tr$ that {performs better}
than training $\mf$ solely on $\cD_\val$.

\begin{table*}
\centering
\caption{
Comparison of classic Importance Weighting (IW), Dynamic IW (DIW), and the proposed Accelerated DIW (ADIW) method.
}
\label{tab:divergence_comparison}
\begin{tabular}{lllcccccc}
\toprule
\textbf{Category} & \textbf{Divergence} & \textbf{Method} &
\textbf{\makecell[c]{Weight \\ model}} &
\textbf{\makecell[c]{Deep learning \\ compatibility}} & 
\textbf{\makecell[c]{WE \\ optimization}} &
\textbf{\makecell[c]{Overall \\ optimization}} &
\textbf{\makecell[c]{Comp. \\ cost}} & \textbf{\makecell[c]{Practical \\ adapt.}} \\
\midrule
\multirow{4}{*}{IPM} 
 & \multirow{3}{*}{MMD} 
    & KMM \citep{huang2007correcting} & \makecell[c]{None} & \ding{55} & \makecell[c]{Convex QP \\ (all data, once)} &  \makecell[c]{Two-step \\ (WE $\to$ WC)} & -- & Low \\
 &  & DIW \citep{fang2020rethinking} & \makecell[c]{None} & \ding{51} & \makecell[c]{Convex QP \\ (per batch)} & \makecell[c]{End-to-end \\ (WE $\circlearrowright$ WC)} & High & High \\
 &  & ADIW-KM (ours) & \makecell[c]{None} & \ding{51} & \makecell[c]{PGD \\ (per batch)} & \makecell[c]{End-to-end \\ (WE $\circlearrowright$ WC)} & Low & High \\
 & Wasserstein-1 & ADIW-WA (ours) & \makecell[c]{None} & \ding{51} & \makecell[c]{PGD$^\ast$ \\ (per batch)} & \makecell[c]{End-to-end \\ (WE $\circlearrowright$ WC)} & Low & High \\
\midrule
\multirow{2}{*}{\makecell[c]{$f$-divergence \\ \& Bregman}}
 & \multirow{2}{*}{KL} 
    & KLIEP \citep{sugiyama2008direct} & \makecell[c]{Parametric } & \ding{55} & \makecell[c]{Convex\\ (all data, once)} & \makecell[c]{Two-step \\ (WE $\to$ WC)} & -- & Low \\
 &  & ADIW-KL (ours) & \makecell[c]{Parametric } & \ding{51} & \makecell[c]{PGD \\ (per batch)} & \makecell[c]{End-to-end \\ (WE $\circlearrowright$ WC)} & Low & High \\
\midrule
\multirow{2}{*}{Bregman} 
 & \multirow{2}{*}{Squared Loss} 
    & LSIF \citep{kanamori2009least} & \makecell[c]{Parametric} & \ding{55} & \makecell[c]{Convex QP \\ (all data, once)} & \makecell[c]{Two-step \\ (WE $\to$ WC)} & -- & Low \\
 &  & ADIW-LS (ours) & \makecell[c]{Parametric} & \ding{51} & \makecell[c]{PGD \\ (per batch)} & \makecell[c]{End-to-end \\ (WE $\circlearrowright$ WC)} & Low & High \\
\bottomrule
\end{tabular}

\vspace{2pt}
\parbox{\linewidth}{\footnotesize
\textit{Abbreviations:}
WE = weight estimation;
WC = weighted classification;
Comp. cost = per-iteration computational cost (comparable only across end-to-end methods);
Practical adapt. = ability to handle complex datasets;
QP = quadratic programming;
PGD = projected gradient descent;
LS = least squares.

\textit{Optimization Schemes:}
Two-step methods optimize WE first and then WC, denoted as ``WE $\to$ WC'',
while end-to-end methods jointly optimize WE and WC, denoted as ``WE $\circlearrowright$ WC''.
$^\ast$ For ADIW-WA, PGD requires solving an auxiliary maximization problem to obtain the optimal critic for gradient computation.

\textit{Notes:}
This comparison is conceptual only.
The improved adaptability of DIW and ADIW arises from their compatibility with deep models in the WE step.
Empirical comparisons with classic IW methods are reported in Fang et al.~\cite{fang2020rethinking}, while comparisons between DIW and ADIW are provided in Section~\ref{sec:ADIW_exp}.

}
\end{table*}

\subsection{Importance Weighting}
\label{sec:iw}
\emph{Importance weighting}~(IW)
is a widely used technique for addressing distribution shift by reweighting training samples to match the test distribution.
{
Assuming $p_\te(\bx, y)$ is \emph{absolutely continuous} w.r.t. $p_\tr(\bx, y)$, i.e., $p_\te(\bx, y) > 0$ implies $p_\tr(\bx, y) > 0$,
the risk in Eq.~\eqref{eq:risk} can be rewritten as}
an importance-weighted expectation of $\ell$ over $p_\tr(\bx, y)$:
\begin{align}
    \bE_{p_\te}\left[\ell(\mf(\bx), y)\right] = \bE_{p_\tr}\left[w^*(\bx, y)\ell(\mf(\bx), y)\right],
\end{align}
where $w^*(\bx, y) = \frac{p_\te(\bx, y)}{p_\tr(\bx, y)}$ is defined as the \emph{weight} or \emph{importance}, compensating for the difference between the training and test distributions.

Classic IW is usually formulated as a two-step approach:

(i) \emph{Weight Estimation} (WE).
In the WE step, $w^*(\bx, y)$ is estimated from $\cD_\tr$ and $\cD_\val$.

(ii) \emph{Weighted Classification} (WC). 
In the WC step, we plug-in the estimated weights to train $\mf$.

{
Representative classic IW methods include \emph{kernel mean matching} (KMM)~\citep{huang2007correcting}, the \emph{Kullback--Leibler importance estimation procedure} (KLIEP)~\citep{sugiyama2008direct}, and \emph{least-squares importance fitting} (LSIF)~\citep{kanamori2009least}, as shown in Table~\ref{tab:divergence_comparison}.}
Additional discussion of related approaches for learning under distribution shift is provided in Appendix~\ref{app:ds}.

\subsection{Dynamic Importance Weighting}
\label{sec:diw}
\emph{Dynamic importance weighting} (DIW) \citep{fang2020rethinking} was proposed as an end-to-end solution that adapts classic IW to deep learning.
In DIW, a feature extractor derived from the current deep model is used to apply a \emph{nonlinear transformation} $\pi:\cX\times\cY\to\calZ$ that maps data into a $d_\mathrm{r}$-dimensional vector space $\calZ$, where $d_\mathrm{r}\ll d$.
The transformed random variable, denoted by $\bz = \pi(\bx, y)$, is induced by $(\bx, y)$ and inherits its randomness. 
WE on $\bz$ is considerably easier than on $(\bx, y)$.
The goal of DIW is to learn a set of weights
$\mathcal{W}=\{w_i=w(\bz_i^\tr)\}_{i=1}^{n_\tr}$
such that
\begin{align*}
    \bE_{p_\te}\left[\ell(\mf(\bx), y)\right] = \bE_{p_\tr}\left[w(\bz)\ell(\mf(\bx), y)\right].
\end{align*}

In DIW, two practical choices of $\pi$ were proposed:

(i) \emph{Hidden-layer-output Transformation.}
First, we estimate a class-prior ratio $r_y^* = \pte(y)/\ptr(y)$.
Next, $\{(\bx_i^\tr, y_i^\tr)\}_{i=1}^{n_\tr}$ and $\{(\bx_i^\val, y_i^\val)\}_{i=1}^{n_\val}$ are partitioned by the class label $y$.
Finally, WE is performed separately for each class, i.e., $C$ times over $C$ class partitions, according to
\begin{align*}
\frac{\pte(\bx,y)}{\ptr(\bx,y)}
&= \frac{\pte(y)\cdot\pte(\bx\mid y)}{\ptr(y)\cdot\ptr(\bx\mid y)} \\
&= \frac{\pte(y)}{\ptr(y)} \cdot \frac{\pte(\bx\mid y)}{\ptr(\bx\mid y)}
= r_y^* \cdot \frac{\pte(\bz\mid y)}{\ptr(\bz\mid y)},
\end{align*}
where $\bz$ denotes the transformed data obtained from the hidden-layer-output transformation, assuming a fixed, deterministic, and invertible $\pi$ as in Fang et al.~\cite{fang2020rethinking}.

(ii) \emph{Loss-value Transformation}. 
$\pi$ transforms each sample into its one-dimensional loss value, i.e., $\pi:(\bx,y)\mapsto\ell(\mf(\bx),y)$.
DIW aims to learn weights such that, under the current model parameters $\theta_t$,
\begin{align*}
\frac{1}{n_\val}\sum_{i=1}^{n_\val}
\ell(\mf(\bx_i^\val),y_i^\val)\big|_{\theta=\theta_t}
&\approx \\
&\kern-0.6em
\frac{1}{n_\tr}\sum_{i=1}^{n_\tr}
w_i\,\ell(\mf(\bx_i^\tr),y_i^\tr)\big|_{\theta=\theta_t}.
\end{align*}
After applying $\pi$, KMM is performed on $\bz$ using a quadratic programming (QP) solver.
Once the weights are estimated, they are fixed, and the feature extractor and the deep model are updated by optimizing the WC objective.
By iterating between WE and WC within each mini-batch, DIW provides a modern and conceptually elegant realization of classic IW.

\section{Accelerated Dynamic Importance Weighting}
\label{sec:ADIW}
Although DIW adapts classic IW to deep learning, it remains limited in computational efficiency and support for diverse WE methods, as discussed in Section~\ref{sec:intro}.
To address these limitations, we now introduce \emph{accelerated dynamic importance weighting}~(ADIW), including its unified framework, efficient implementation, algorithm, and convergence analysis.

\subsection{A Unified IW Framework for Deep Learning}
\label{sec:ADIW_framework}
Let $\calP(\calZ)$ denote the set of probability distributions on $\calZ$ 
{that have densities.}
Following DIW \citep{fang2020rethinking}, $\pi$ is applied to transform the original joint densities $p_\tr(\bx, y)$ and $p_\te(\bx, y)$ into $p_\tr(\bz)$ and $p_\te(\bz)$, respectively.


For any two densities $p, q \in \calP(\calZ)$, a \emph{probability divergence} is defined as a function $D:\calP(\calZ)\times\calP(\calZ)\to\mathbbR$ satisfying: 

(i) Non-negativity: $D(p\parallel q)\geq0, \forall p, q\in\calP(\calZ)$,

(ii) Positivity: $D(p\parallel q)=0$ iff (i.e., if and only if) $p=q$.

Common families of divergences include \emph{integral probability metrics}~(IPMs) \citep{muller1997integral}, \emph{f-divergences} \citep{ali1966general}, and \emph{Bregman divergences} \citep{bregman1967relaxation},
which are briefly introduced below.

IPMs cover a wide range of well-known statistical distances, including the \emph{Wasserstein-1 distance} \citep{ln1969markov}, the \emph{total variation}~(TV) \emph{distance} \citep{gibbs2002choosing} (which is also an $f$-divergence), and the \emph{maximum mean discrepancy}~(MMD) \citep{borgwardt2006integrating,gretton2012kernel}.
Formally, given a class $\calF$ of real-valued functions on $\calZ$, an IPM between distributions $p$ and $q$ is defined as
\begin{align}
    D_{\calF}(p \parallel q) = \sup_{f\in \calF} \left| \int_\calZ f \dif p - \int_\calZ f \dif q \right|.
\end{align}
IPMs are generally \emph{pseudometrics} \citep{sriperumbudur2010hilbert} since most instances of $D_\calF$ do not satisfy condition~(ii).
However, some notable exceptions, including the Wasserstein-1 distance, the TV distance, and MMD with \emph{characteristic kernels} \citep{fukumizu2007kernel} (e.g., Gaussian and Laplacian kernels), satisfy both conditions~(i) and~(ii).

The family of $f$-divergences includes, but is not limited to, the \emph{Kullback–Leibler} (KL) \emph{divergence} \citep{kullback1951information}, the \emph{Hellinger distance} \citep{hellinger1909neue}, and the TV distance.
Assuming that $p$ is absolutely continuous with respect to $q$, the $f$-divergence between $p$ and $q$, induced by a convex function $f$, is defined as
\begin{align}
    D_{f}(p \parallel q) = \int_\calZ f \left(\frac{\dif p}{\dif q} \right)\dif q.
\end{align}
According to Khosravifard et al.~\cite{khosravifard2007confliction}, $D_{f}$ satisfies condition~(i) if and only if $f(1)\geq0$, and satisfies condition~(ii) if and only if $f(1)=0$ and $f$ is strictly convex at $x=1$.

The Bregman divergence is another important family of divergences, with the \emph{squared Euclidean distance} being a representative example.
Let $F: \calZ \to \mathbbR$ be a continuously differentiable and strictly convex function.
The Bregman divergence induced by $F$ between $p$ and $q$ is defined as
\begin{align}
    D_{F}(p \parallel q) = F(p) - F(q) - \left \langle \nabla F(q), p-q \right \rangle,
\end{align}
which satisfies both conditions~(i) and~(ii).

After applying $\pi$, WE is formulated as a density-matching problem between the transformed test density $p_\te(\bz)$ and the weighted transformed training density $w(\bz)\,p_\tr(\bz)$. 
Specifically, for $p_\te, \, w\cdot p_\tr \in \calP(\calZ)$,
the goal of WE is to find $w$ within a feasible set that minimizes a probability divergence $J(w)$ between $p_\te$ and $w \cdot p_\tr$:
\begin{align}
    \label{eq:ADIW_obj_exp}
    J(w) := D\left(p_\te \parallel w\cdot p_\tr\right),
\end{align}
where $w$ may represent either a weight vector or a parametric weight model.
Let $\widehat J(\calW)$ denote its empirical approximation.
Then, the empirical WE objective is
\begin{align}
    \label{eq:ADIW_obj_emp}
    \min_{\calW\in Q}\widehat J(\calW),
\end{align}
where $Q$ denotes the feasible constraint set for $\calW$.



\subsection{Efficient Implementation of ADIW}
Building upon the above unified framework, we now present the efficient implementation of ADIW.

In DIW, the WE optimization is solved to convergence at every iteration, which is often unnecessary for two reasons.
First, since WE operates on transformed representations, it only approximates true weights, making full convergence overly restrictive.
Second, WE and WC alternate at the mini-batch level: once WC updates the classifier, the transformed representations used by WE change accordingly, rendering the previous exact WE solution obsolete. 
As a result, an inexact but feasible update may be sufficient to ensure progress. 

These insights lead us to the design of ADIW, where WE is performed using a lightweight \emph{projected gradient descent} (PGD), which is also known as \emph{gradient projection} \citep{polyak1966constrained}. 
Specifically, if $\widehat J(\calW)$ is a differentiable and convex function with respect to $\calW$, we can update $\calW$ by a \emph{gradient descent} step followed by a projection onto the constraint set. 
Assume $\widehat J(\calW)$ is continuously differentiable and $Q$ is a convex and compact set encoding the constraints.
Then the empirical WE objective in Eq.~\eqref{eq:ADIW_obj_emp} can be solved as
\begin{align}
    \calW_{t+1} \gets \Pi_Q\left(\calW_{t} - \eta_t \nabla \widehat J(\calW_t)\right),
\end{align}
where $\calW_{t}$ is the set of weights at iteration $t$, $\eta_t\in(0, \infty)$ is the step size, and $\Pi_Q$ is the projection operator onto $Q$ to satisfy the constraints.
By performing only a few PGD-based updates per mini-batch, 
{
ADIW performs WE entirely on GPUs without QP solvers, greatly improving efficiency.}

\begin{algorithm}[t]
    \caption{Accelerated dynamic importance weighting.}
    \label{alg:ADIW}
    \begin{algorithmic}
        \REQUIRE training minibatch $\Str$ with index set $\mathcal{I}_{\Str}$, validation minibatch $\Sval$, current 
        model $\boldsymbol{f}_{\theta}$, current weight vector $\calW$,
        WE objective $\widehat J$ with constraint set $Q$,  step size $\eta_t$, number of WE iterations $T$
    \end{algorithmic}
    \begin{algorithmic}[1]
        \STATE \texttt{retrieve} initial weights $\calW_1 \gets \calW[\mathcal{I}_{\Str}]$
        \STATE \texttt{forward} the input parts of $\Str$ \& $\Sval$
        \STATE \texttt{compute} the loss values as $\Ltr$ \& $\Lval$
        \FOR{t = 1 to T}
        \STATE \texttt{compute} gradients $\nabla_{\calW_t} \widehat J$ from $\Ltr$ \& $\Lval$
        \STATE \texttt{update} $\calW_{t+1} \gets \calW_t - \eta_t \nabla_{\calW_t} \widehat J$
        \STATE \texttt{project} $\calW_{t+1}$ onto $Q$
        \ENDFOR
        \STATE \texttt{update} weights in $\calW$ for $\Str$: $\calW[\mathcal{I}_{\Str}] \gets \calW_T$
        \STATE \texttt{weight} the empirical risk $\widehat{R}(\mf)$ by $\mathcal{W}_T$ 
        \STATE \texttt{backward} $\widehat{R}(\mf)$ and \texttt{update} $\theta$
    \end{algorithmic}
\end{algorithm}

\subsection{The ADIW Algorithm}
\label{sec:algo}
The ADIW algorithm is summarized in Algorithm~\ref{alg:ADIW}, using
the loss-value transformation as $\pi$.
The variant based on the hidden-layer-output transformation is provided in Algorithm~\ref{alg:ADIW-F} in Appendix~\ref{sec:appendix-algo}.
For simplicity, we focus on the loss-value transformation in the main text, while Section~\ref{sec:ab_representation} presents an ablation study comparing the two transformations.
Although our discussion focuses on classification, the concept of ADIW may also be applied to other tasks (e.g., regression), provided that an appropriate transformation $\pi$ is designed for the task.

The above algorithm applies to non-parametric IW methods (e.g., those based on MMD or the Wasserstein-1 distance).
For parametric methods
(e.g., those based on the KL divergence or the squared distance), 
$\calW$ denotes the parameters of the weight model, whose dimensionality is determined by the validation set. 
In this case, the algorithm updates the weight model parameters rather than individual weights, using validation minibatch indices.
Section~\ref{sec:examples} presents several instantiations of classic IW methods within ADIW.

\begin{figure}[t]
     \centering
     \includegraphics[width=\columnwidth]{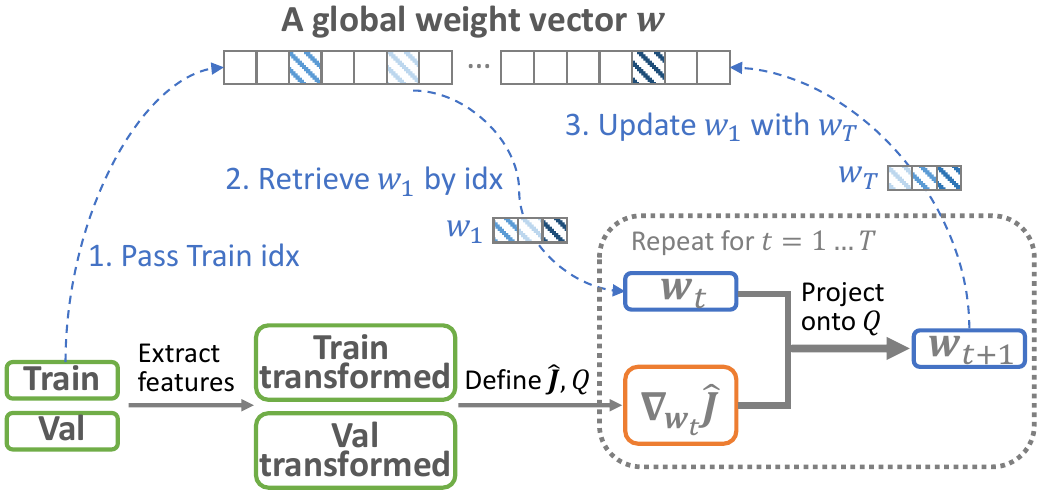}
     \caption{Illustration of WE in ADIW. ``idx'' denotes the index; ``Train'' and ``Val'' denote training and validation data, respectively.}
     \label{fig:adiw_we}
\end{figure}

An illustration of WE in ADIW is given in Figure~\ref{fig:adiw_we}.
ADIW maintains a global weight vector $\calW$, initialized to all ones in the first epoch and updated thereafter. 
{
In each iteration, mini-batches of training and validation data are sampled, and a subset $\calW_1$ corresponding to the sampled training indices is retrieved to initialize WE.
Starting from $\calW_1$, the WE loop performs $T$ steps of gradient descent on the transformed data, where each step is followed by a projection onto $Q$, yielding the updated weights $\calW_T$.
The resulting weights are then written back into the global vector $\calW$, so that subsequent mini-batches start from the most recently updated values.
The same weights $\calW_T$ are also used in the WC step to update the model parameters $\theta$.}

We note that ADIW defines the validation data loader, used to repeatedly sample validation mini-batches, once outside the training loop and recreates it only when exhausted~\footnote{{All methods in Figure~\ref{fig:overview}(\subref{fig:time}) adopt this improvement.}}, 
instead of reconstructing it at every iteration as in DIW.
This eliminates redundant loader construction,
which has little impact on small datasets but yields noticeable efficiency gains for large-scale datasets (see Section~\ref{sec:ab_profiling}).

{
To highlight the advantages of ADIW,
Table~\ref{tab:divergence_comparison} presents a systematic comparison of classic IW methods, DIW and ADIW.
Classic two-step IW methods are limited by their incompatibility with deep learning, whereas DIW mitigates this by performing WE on the transformed data in a mini-batch manner, but remains coverage restricted and computationally expensive.
ADIW overcomes these drawbacks by extending the framework to support diverse divergences and replacing the costly QP optimization with lightweight, GPU-accelerated PGD updates, thereby achieving both broader coverage and scalability for modern deep learning applications.}

\subsection{Convergence Analysis}
\label{sec:proof}%
Here we present the convergence analysis of ADIW.
We consider a slightly modified version of Algorithm~\ref{alg:ADIW}, where the stochastic gradient generator returns an unbiased estimate of the true gradient at each iteration.
Let the empirical weighted objective be 
\begin{align}
\label{empirical-weighted-training}
    R_\tr(\btheta,w)=\frac{1}{n_\tr}\sum_{i=1}^{n_\tr}w(\bz_i)\ell(\boldsymbol{f}_{\btheta}(\bx_i),y_i),
\end{align}
where each data point $(\bx_i,y_i)$ is mapped to a representation $\bz_i=\pi_{\btheta}(\bx_i,y_i)$ via a feature extractor $\pi_{\btheta}$. $w^*(\bz)=\frac{p_\te(\bz)}{p_\tr(\bz)}$ is the true importance weight, and $w(\bz)$ is its learned approximation via Eq.~\eqref{eq:ADIW_obj_emp}.
{
We next present the main convergence result, with the proof deferred to Appendix~\ref{sec:appendix_proof}.}

\begin{theorem}[Convergence of ADIW]
    \label{thm: convergence rate}
    Suppose Assumptions~\ref{as:differentiable}–\ref{as:Sensitivity} {in Appendix~\ref{app:assumptions}} hold. 
    Let the surrogate loss $\ell$ be bounded as $\ell< M$ and $M>0$, the learned weights satisfy $w\le B$, $B>0$, the number of inner-loop steps be $T$, and the number of outer-loop epochs be $S$.
    Set the outer-loop learning rate as $\alpha_s=\tfrac{c}{\sqrt{S}}$ for some constant $c>0$.
    Then, the output of ADIW after $S$ epochs satisfies     
    \begin{align}
    \bE[\|\nabla R_\tr(\btheta_s,\calW_{s})\|^{2}] \leq \frac{\Delta}{\sqrt{S}},
    \end{align}
    where $\|\cdot\|$ denotes $L_2$ norm, $\Delta=\frac{2BM}{c}+2cL\sigma^{2}+\frac{4cMT\sigma^{2}}{n_\tr}.$
    Equivalently, ADIW finds an $\epsilon$-stationary point, i.e., 
    $$\bE[\|\nabla R_\tr(\btheta_s,\calW_{s})\|^{2}]\leq\epsilon$$ within $O\left(1 / \epsilon^{2}\right)$ iterations.
\end{theorem}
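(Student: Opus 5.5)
We follow the standard nonconvex SGD descent-lemma argument, applied to the outer-loop iterates $\btheta_s$, with the inner PGD updates of the weights treated as a bounded perturbation.

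\textbf{Descent step.} Assumptions~\ref{as:differentiable}–\ref{as:Sensitivity} are expected to give three things:
\begin{itemize}
\item $L$-smoothness of $R_\tr(\cdot,\calW)$ in $\btheta$;
\item unbiased stochastic gradients with variance at most $\sigma^2$;
\item a sensitivity (Lipschitz-type) bound on how $R_\tr$ changes when the weights $\calW$ are moved by one PGD step.
\end{itemize}
With these, the first step is to write, for the update $\btheta_{s+1}=\btheta_s-\alpha_s g_s$,
\begin{align*}
R_\tr(\btheta_{s+1},\calW_s)\le R_\tr(\btheta_s,\calW_s)-\alpha_s\langle\nabla R_\tr(\btheta_s,\calW_s),g_s\rangle+\tfrac{L\alpha_s^2}{2}\|g_s\|^2 .
\end{align*}
Taking conditional expectations and using unbiasedness together with $\bE\|g_s\|^2\le\|\nabla R_\tr\|^2+\sigma^2$ gives
\begin{align*}
\bE[R_\tr(\btheta_{s+1},\calW_s)]\le \bE[R_\tr(\btheta_s,\calW_s)]-\alpha_s\bigl(1-\tfrac{L\alpha_s}{2}\bigr)\bE\|\nabla R_\tr\|^2+\tfrac{L\alpha_s^2\sigma^2}{2}.
\end{align*}

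\textbf{Weight-change step.} Next we account for the switch from $\calW_s$ to $\calW_{s+1}$, which consists of $T$ warm-started PGD steps on the current mini-batch. I would bound the drift $R_\tr(\btheta_{s+1},\calW_{s+1})-R_\tr(\btheta_{s+1},\calW_s)$ as follows.
\begin{itemize}
\item Since $R_\tr$ is linear in $w$ with coefficients $\ell/n_\tr<M/n_\tr$, the drift is at most $\tfrac{M}{n_\tr}\sum_i|\Delta w_i|$.
\item By the sensitivity assumption, each inner step moves the weights by an amount controlled by the outer step, on the order of $\alpha_s\sigma^2$.
\item Summing over the $T$ inner steps gives a drift of $O\bigl(\tfrac{MT\alpha_s^2\sigma^2}{n_\tr}\bigr)$ after taking expectations.
\end{itemize}
The $\alpha_s^2$ scaling is what yields the term $4cMT\sigma^2/n_\tr$ in $\Delta$. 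This is the main obstacle, and its outcome depends on exactly what the sensitivity assumption states. If it only gives first-order control $O(\alpha_s)$, I would try pairing it with the gradient bound via Young's inequality, so that the cross term is absorbed into $\tfrac{\alpha_s}{2}\|\nabla R_\tr\|^2$ plus an $O(\alpha_s^2)$ remainder.

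\textbf{Telescoping and step-size choice.} Finally, we sum over $s=1,\dots,S$ and telescope. The starting value satisfies $R_\tr(\btheta_1,\calW_1)\le BM$ (since $w\le B$ and $\ell<M$), and $R_\tr\ge0$. Choosing $\alpha_s=c/\sqrt S$ small enough that $1-L\alpha_s/2\ge\tfrac12$, then dividing by $S\alpha_s/2$, gives
\begin{align*}
\frac1S\sum_s\bE\|\nabla R_\tr(\btheta_s,\calW_s)\|^2\le \frac{2BM}{c\sqrt S}+\frac{2cL\sigma^2}{\sqrt S}+\frac{4cMT\sigma^2}{n_\tr\sqrt S}.
\end{align*}
Interpreting the output as an iterate drawn uniformly at random (or the best iterate) yields the bound $\Delta/\sqrt S$. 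Setting $\Delta/\sqrt S\le\epsilon$ then gives $S=O(1/\epsilon^2)$.
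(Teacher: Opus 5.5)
Your skeleton matches the paper's proof: a descent lemma in $\btheta$; the drift bound $\frac{M}{n_\tr}\|\calW_{s+1}-\calW_s\|_1$ from linearity in $w$ and $\ell<M$; telescoping with $0\le R_\tr\le BM$; and a uniformly random output iterate. The gap is in the weight-change step, which you flag as the main obstacle. It is not closed, and the version you sketch does not produce the rate.

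If each inner step moves the weights by $O(\alpha_s\sigma^2)$, then $T$ steps give $\|\calW_{s+1}-\calW_s\|_1=O(T\alpha_s\sigma^2)$. The resulting drift is $O(MT\alpha_s\sigma^2/n_\tr)$, which is first order in $\alpha_s$. After you divide the telescoped sum by $S\alpha_s/2$, this leaves a constant of order $MT\sigma^2/n_\tr$ that does not decay with $S$. So the $\alpha_s^2$ scaling you assert does not follow from your own per-step bound. The Young's-inequality fallback does not fix this. Splitting a first-order cross term $\alpha_s ab$ as $\tfrac{\alpha_s}{4}a^2+\alpha_s b^2$ leaves a remainder that is still $O(\alpha_s)$ unless $b$ is itself $O(\sqrt{\alpha_s})$.

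The missing ingredient is the specific form of condition (W2) in Assumption~\ref{as:Sensitivity}:
$\|\eta_t\tilde{\nabla}_{\calW}\widehat J(\calW_{s,t};\btheta_s)\|_1\le\alpha_s^2\|\tilde{\nabla}_{\btheta}R_\tr(\btheta_s,\calW_s)\|^2$.
It ties each inner step quadratically to the outer step size and to the squared norm of the stochastic gradient itself, not to $\sigma^2$.

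The paper uses it as follows. Combine (W2) with (W1), nonexpansiveness of $\Pi_Q$ in $\|\cdot\|_1$, and with $\calW_{s,t}=\Pi_Q(\calW_{s,t})$. This gives $\|\calW_{s,t+1}-\calW_{s,t}\|_1\le\alpha_s^2\|\tilde{\nabla}_{\btheta}R_\tr(\btheta_s,\calW_s)\|^2$. The warm start $\calW_{s,0}=\calW_{s-1,T}$ makes the outer weight change equal to the total inner change. Hence the drift is at most $\frac{MT\alpha_s^2}{n_\tr}\|\tilde{\nabla}_{\btheta}R_\tr(\btheta_s,\calW_s)\|^2$.

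The conditional expectation of this drift is at most $\frac{MT\alpha_s^2}{n_\tr}\bigl(\|\nabla_{\btheta}R_\tr(\btheta_s,\calW_s)\|^2+\sigma^2\bigr)$. It therefore contains a gradient-norm part that your accounting leaves out. That part must be absorbed into the descent term, which requires $\alpha_s\bigl(\tfrac{L}{2}+\tfrac{MT}{n_\tr}\bigr)\le\tfrac12$, i.e.\ the paper's $\alpha_s<n_\tr/(2MT+n_\tr L)$. Your condition $1-\tfrac{L\alpha_s}{2}\ge\tfrac12$ is not enough. Only the leftover $\frac{MT\alpha_s^2\sigma^2}{n_\tr}$ produces the $\frac{4cMT\sigma^2}{n_\tr}$ term in $\Delta$.

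With this step in place, the rest of your argument goes through. With the constant step $\alpha_s=c/\sqrt S$ from the statement, your telescoping in fact gives the last two terms of $\Delta$ with a factor $\tfrac12$ to spare. The paper's factor $2$ comes from using $\alpha_s=c/\sqrt s$ together with $\sum_{s\le S}s^{-1/2}\le2\sqrt S$.
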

The theorem establishes that ADIW converges to a stationary point of the weighted training objective in Eq.~\eqref{empirical-weighted-training} at the standard $O(1/\sqrt{S})$ rate. 
{The constant in the leading-order term includes}
a term that decreases with the training sample size $n_\tr$ and vanishes for large datasets. Consequently, for sufficiently large datasets, convergence is governed primarily by optimization complexity, whereas for smaller datasets statistical estimation error also plays a role.


\section{Instantiations of Classic IW Methods}
\label{sec:examples}
In this section, we illustrate how classic IW methods can be instantiated within the unified ADIW framework, each corresponding to a particular divergence.
We also introduce a novel IW variant based on the Wasserstein-1 distance.

\subsection{Maximum Mean Discrepancy}
We first consider ADIW with the \emph{maximum mean discrepancy}~(MMD), which recovers KMM~\citep{huang2007correcting}.
Let $\cH$ be a Hilbert space of real-valued functions $f: \mathbbR^{d_r}\to\mathbbR$ 
with inner product $\langle\cdot,\cdot\rangle_\cH$, 
or $\cH$ be a \emph{reproducing kernel Hilbert space}~(RKHS), where $k:(\bz,\bz')\mapsto\langle\phi(\bz),\phi(\bz')\rangle_\cH$ is the reproducing kernel of $\cH$ with a feature map $\phi:\bR^{d_\mathrm{r}}\to\cH$.
Then, the objective of weight estimation in Eq.~\eqref{eq:ADIW_obj_exp} with MMD is given by
\begin{align*}
D_\mathrm{MMD}\!\left(p_\te \parallel w\cdot p_\tr \right) = \sup_{\|f\|_\cH\le1}
\bE_{p_\te}[f(\bz)] - \bE_{p_\tr\cdot w(\bz)}[f(\bz)].
\end{align*}
Let $\widehat J_\mathrm{KM}$ denote the empirical objective of KMM, which minimizes the empirical estimate of the squared MMD with sample averages: 
\begin{align}
\label{eq:ADIW_kmm_emp}
\widehat J_\mathrm{KM}(\bw) = \: &\bw^\T\bK\bw - 2\bk^\T\bw \\
&\mathrm{subject~to~} \; w_i\ge0 \; \mathrm{and} \; |\frac{1}{n_\tr}\sum_{i=1}^{n_\tr}w_i-1|\le\epsilon, \notag
\end{align}
where $\bw\in\bR^{n_\tr}$ be the weight vector, $\bK_{ij}:=k(\bz_i^\tr,\bz_j^\tr)$, $\bk_i:=\frac{n_\tr}{n_\val}\sum_{j=1}^{n_\val}k(\bz_i^\tr,\bz_j^\val)$, and $\epsilon>0$ is a slack variable.

\subsection{Kullback--Leibler Divergence}
Next, we consider ADIW with the \emph{Kullback--Leibler} (KL) \emph{divergence}, as used in KLIEP~\citep{sugiyama2008direct}.
Let the weight $w(\bz)$ be modeled by a linear-in-parameter model 
$g(\bz)=\sum\nolimits_{l=1}^\nparams\beta_l \psi_l(\bz)=\bbeta^\top\boldsymbol{\psi}(\bz)$,
where $\bbeta=(\beta_1,\ldots,\beta_\nparams)^\top\in\bR^\nparams$ are learnable parameters and $\boldsymbol{\psi}(\bz)$ is a vector of $\nparams$ \emph{basis functions}. That is, $\boldsymbol{\psi}(\bz) = (\psi_1(\bz),\ldots,\psi_\nparams(\bz))^\top\in\bR^\nparams$, where 
$\psi_l(\bz):=\exp\!\left(-\frac{\|\bz-\bz_l^\val\|^2}{2\sigma^2}\right)$,
and $\{\bz_l^\val\}_{l=1}^{\nparams}$ are kernel centers from the validation data.
With the KL divergence, 
Eq.~\eqref{eq:ADIW_obj_exp} becomes
\begin{align*}
D_{\mathrm{KL}}\!\left(p_\te \parallel w\cdot p_\tr\right)
&= \bE_{p_\te}\!\left[\log \frac{p_\te(\bz)}{g(\bz)p_\tr(\bz)}\right] \\
&= \underbrace{\bE_{p_\te}\!\left[\log \frac{p_\te(\bz)}{p_\tr(\bz)}\right]}_{:=\mathrm{const.}}
   - \bE_{p_\te}\!\left[\log g(\bz)\right].
\end{align*}
After dropping the first constant term and plugging in the definition of $g(\bz)$, the empirical objective of KLIEP corresponds to the empirical counterpart of the second term, which is defined as
\begin{align}
\label{eq:ADIW_kliep_emp}
\widehat J_{\mathrm{KL}}(\bbeta)
&= - \frac{1}{n_\val}\sum_{i=1}^{n_\val}
   \log\bbeta^\top\boldsymbol{\psi}(\bz_i^\val) \notag \\
&\mathrm{~subject~to~}\quad
\bbeta^\top \boldsymbol{\bar{\psi}}_\tr = 1,\quad
\bbeta\geq\textbf{0}_\nparams,
\end{align}
where $\boldsymbol{\bar{\psi}}_\tr := \frac{1}{n_\tr}\sum_{j=1}^{n_\tr}\boldsymbol{\psi}(\bz_j^\tr)$
and $\textbf{0}_\nparams$ are the $\nparams$-dimensional vectors with all zeros. $\bbeta\geq\textbf{0}_\nparams$ is applied in the element-wise manner. 

\subsection{Squared Distance}
We then consider the case of the \emph{squared distance}, as used in \emph{least-squares importance fitting} (LSIF)~\citep{kanamori2009least}.
Let us model $w(\bz)$ by $g(\bz)$ in the same way as KLIEP.
Then, the objective is to minimize the squared distance between $w$ and $g$:
\begin{align*}
D_\mathrm{SQ}(w \parallel g)
&= \tfrac12 \bE_{p_\tr}\!\left[(w(\bz)-g(\bz))^2\right] \\
&= \tfrac12 \bE_{p_\tr}\!\left[g^2(\bz)\right]
   - \bE_{p_\te}\!\left[g(\bz)\right]
   + \underbrace{\tfrac12 \bE_{p_\tr}\!\left[w^2(\bz)\right]}_{:=\mathrm{const.}}.
\end{align*}
After approximating the first two terms with empirical averages and adding a regularizer $\textbf{1}^{\T}_\nparams\bbeta := \sum^\nparams_{l=1}|\beta_l|$, the empirical objective of LSIF is given by
\begin{align}
    \label{eq:ADIW_lsif_emp}
    \widehat J_\mathrm{LS}(\bbeta) 
    & = \frac{1}{2}\bbeta^{\T}\widehat{\boldsymbol{H}}\bbeta-\widehat{\boldsymbol{h}}^{\T}\bbeta+\lambda\textbf{1}^{\T}_\nparams\bbeta \, \notag \\
    &\mathrm{~subject~to~}\quad \bbeta\geq\textbf{0}_\nparams,
\end{align}
where $\widehat{\boldsymbol{H}}_{l, l'}=\frac{1}{n_\tr}\sum_{i=1}^{n_\tr}\psi_{l}(\bz_i^\tr)\psi_{l'}(\bz_i^\tr)$, $\widehat{\boldsymbol{h}}_{l}=\frac{1}{n_\val}\sum_{j=1}^{n_\val}\psi_{l}(\bz_j^\val)$, and $\lambda\geq0$ is a regularization parameter.
Note that for KLIEP and LSIF, the optimization is performed with respect to $\bbeta$, and the importance weights are subsequently computed as $\bbeta^\top\boldsymbol{\psi}(\bz)$.

\subsection{Wasserstein-1 Distance}
Finally, we introduce a novel IW variant based on the \emph{Wasserstein-1 distance} (also known as the \emph{Earth Mover’s Distance} \citep{rubner2000earth}).
For the test density $p_\te$ and the weighted training density $w\cdot p_\tr$ defined on a compact metric space, the Wasserstein-1 distance is defined as 
\begin{align*}
    D_\mathrm{W_1}\left(p_\te \parallel w\cdot p_\tr \right) = \inf_{\gamma \in \Pi(p_\te, w\cdot p_\tr)} \mathbb{E}_{(\bz, \bz') \sim \gamma} [\|\bz - \bz'\|],
\end{align*}
where $\Pi(p_\te, w\cdot p_\tr)$ denotes 
{
the set of all couplings $\gamma$ of $p_\te$ and $w\cdot p_\tr$, i.e., joint distributions whose marginals are $p_\te$ and $w\cdot p_\tr$.}
Its dual representation can be derived from the \emph{Kantorovich–Rubinshtein Theorem} \citep{villani2003topics}:
\begin{align*}
    D_\mathrm{W_1}\left(p_\te \parallel w\cdot p_\tr \right) 
    &=\sup_{\|\Phi\|_\mathrm{Lip} \leq 1} \mathbb{E}_{p_\te} [\Phi(\bz)] - \mathbb{E}_{w\cdot p_\tr} [\Phi(\bz)] \\
    &=\sup_{\|\Phi\|_\mathrm{Lip} \leq 1} \mathbb{E}_{p_\te} [\Phi(\bz)] - \mathbb{E}_{p_\tr} [w(\bz)\Phi(\bz)],
\end{align*}
where $\|\Phi\|_{\mathrm{Lip}}$ denotes the smallest constant $L$ (with $L=1$ for 1-Lipschitz functions) such that
$|\Phi(\bz) - \Phi(\bz^{\prime})| \le L \|\bz - \bz^{\prime}\|$ holds for all $\bz, \bz^{\prime} \in \mathbb{R}^{d_\mathrm{r}}$ and the supremum is taken over all 1-Lipschitz \textit{critic} functions $\Phi: \mathbb{R}^{d_\mathrm{r}} \to \mathbb{R}$.

In practice, the critic $\Phi$ is approximated by a parameterized family $\{\Phi_\nu\}_{\nu \in \mathcal{V}}$, with each $\Phi_\nu$ implemented as a neural network with parameters $\nu \in \mathcal{V}$.
The optimal critic $\Phi^*$ is obtained by solving
\vspace{-0.6ex}
\begin{align}
    \label{eq:ADIW_wa}
    \max_{\nu\in\mathcal{V}, \, \|\Phi_\nu\|_\mathrm{Lip} \leq 1}\mathbb{E}_{p_\te} [\Phi_\nu(\bz)] - \mathbb{E}_{w\cdot p_\tr} [\Phi_\nu(\bz)].
\end{align}
To minimize $D_\mathrm{W_1}$ with respect to the weights $w(\bz)$, we compute its gradient as
\vspace{-0.6ex}
\begin{align}
    \nabla_w D_\mathrm{W_1} = -\Phi^*(\bz) p_\tr(\bz).
\end{align}
Directly enforcing the Lipschitz constraint for neural networks is known to be challenging \citep{arjovsky2017wasserstein}. 
Fortunately, the optimal critic $\Phi^*$ satisfies the property of exhibiting a unit gradient norm almost everywhere under the optimal coupling $\pi$ between $p_\te$ and $w\cdot p_\tr$ \citep{gulrajani2017improved}.
Leveraging this property, we adopt the \emph{gradient penalty} approach \citep{gulrajani2017improved}, which encourages the gradient norm of the critic to stay close to one along paths between samples from $p_\te$ and $w\cdot p_\tr$.
The resulting objective is given by
\begin{align}
\label{eq:ADIW_wa_reg}
J_{W_1}(\nu)
&= \bE_{w\cdot p_\tr}[\Phi_\nu(\bz)]
- \bE_{p_\te}[\Phi_\nu(\bz)] \notag \\
&\hspace{3em} + \kappa\,\bE_{p'}\!\left[(\|\nabla_{\bz}\Phi_\nu(\bz)\|-1)^2\right],
\end{align}
where $p'$ denotes the distribution of points uniformly sampled
along straight-line interpolations between pairs of samples
$\bz \sim p_\te$ and $\bz' \sim w\cdot p_\tr$,
i.e., $\tilde{\bz} = \epsilon \bz + (1-\epsilon)\bz'$ with
$\epsilon \sim \mathrm{Uniform}(0,1)$.
The penalty coefficient $\kappa > 0$ controls the strength of regularization.
Before optimizing the full objective, we warm-start the critic $\Phi_\nu$ by training it with only the first two terms for a few iterations (i.e., without the gradient penalty), which provides a stable initialization.

We note that the optimization procedure in the previous examples typically involves bi-level optimization: an outer optimization that minimizes the weighted classification risk with respect to the classifier parameters $\boldsymbol{f}_{\theta}$, and an inner optimization that learns the importance weights $w(\bz)$. 
However, employing the Wasserstein-1 distance introduces an additional level of optimization complexity.
Specifically, within each update step of the weight estimation (inner optimization), another maximization problem (Eq.~\eqref{eq:ADIW_wa}) must be solved to obtain the optimal critic $\Phi^*$ for evaluating the Wasserstein-1 distance.
Nevertheless, the additional maximization is performed using only a few lightweight updates on mini-batch representations, resulting in computational cost comparable to other ADIW variants in practice (see Figure~\ref{fig:ADIW_time}).

\vspace{5pt}
\section{Experiments}
\label{sec:ADIW_exp}
This section presents an empirical evaluation of ADIW on benchmark datasets.
The ADIW with KMM, KLIEP, LSIF, and using Wasserstein-1 distance for weight estimation (WE) is named as ADIW-KM, ADIW-KL, ADIW-LS, and ADIW-WA respectively.

\subsection{Experimental Setup}
The experiments were conducted under two types of distribution shift, class-prior shift and label noise, using four benchmark datasets: \emph{Fashion-MNIST} \citep{xiao2017}, \emph{CIFAR-10}, \emph{CIFAR-100} \citep{krizhevsky2009learning}, and \emph{ImageNet-1K}, also known as \emph{ImageNet Large Scale Visual Recognition Challenge 2012 (ILSVRC 2012)} \citep{russakovsky2015imagenet}.
We adopted LeNet-5 \citep{lecun1998gradient} as the base model for Fashion-MNIST, trained by SGD \citep{robbins1951stochastic}, and ResNet-18 \citep{he2016deep} as the base model for CIFAR-10/100 and ImageNet-1K, trained by Adam \citep{kingma15iclr}. 
The baselines included: 
\begin{itemize}
    \item \emph{Val-Only} trains the model from scratch using only the limited validation data;
    \item \emph{Uniform} uses the weights of all ones to weight the training data;
    \item \emph{Random} uses random weights drawn from the \emph{rectified Gaussian distribution};
    \item \emph{L2R} is the learning to reweight examples method \citep{ren2018learning};
    \item \emph{MW-Net} is short for meta-weight-net \citep{shu2019meta}, a parametric version of L2R;
    \item \emph{DIW} is dynamic importance weighting \citep{fang2020rethinking}.
\end{itemize}
Additional details of the datasets, models, and experimental setups are provided in Appendix~\ref{sec:appendix_setup}. We next describe the specific setups for class-prior shift and label noise, respectively.

\subsubsection{Experiments under Class-prior Shift} 
To simulate class-prior shift, we modified the training set of Fashion-MNIST following Buda et al.~\cite{buda2018systematic}. 
All classes were randomly partitioned into majority and minority groups, with a fraction $\mu$ of the classes designated as minority. 
All classes within each group were assigned equal sample sizes, but the sample size per class in the minority group was reduced relative to that of the majority group, with the imbalance ratio defined as $\rho$. 
For validation set, we randomly sampled 10 instances per class from the training data. 
The original test set of Fashion-MNIST was left unchanged for evaluation.

\subsubsection{Experiments under Label Noise}
The label-noise experiments were conducted on Fashion-MNIST, CIFAR-10, CIFAR-100, and ImageNet-1K.
We imposed label noise to the training set while keeping the validation/test set intact. 
Two types of label noise were imposed on the training data: \emph{pair flip} \citep{han2018co}, where a label may flip to its neighborhood class with a probability, and \emph{symmetric flip} \citep{van2015learning}, where a label may flip to all other classes with equal probability (i.e., the \emph{noise rate}).
For ImageNet-1K, we trained the model from scratch on the original training set with synthetic label noise, randomly selected 20,000 training data as a clean validation set, and evaluated on the original ImageNet-1K validation set. 
For other datasets, we created the clean validation set by randomly sampling 1,000 data from the training set. 

\begin{table}[t]
\caption{
Accuracy (\%) and time on Fashion-MNIST under class-prior shift.
}
\label{tab:adiw_ci}
\centering
\resizebox{0.8\columnwidth}{!}{%
\begin{tabular}{l|cc|cc}
\toprule
\multirow{2}{*}{Method} & \multicolumn{2}{c|}{$\mu=0.5, \rho=50$} & \multicolumn{2}{c}{$\mu=0.5, \rho=100$} \\
\cmidrule{2-5}
 & Acc. (std) & Time & Acc. (std) & Time \\
\midrule
Val-Only    & 65.55 (1.08)          & 0.02  & 65.55 (1.08)          & 0.02  \\
DIW         & \textbf{78.09 (0.46)} & 22.97 & \textbf{74.19 (0.39)} & 17.88 \\
ADIW-KM     & \textbf{77.69 (0.22)} & 6.84  & \textbf{74.23 (0.52)} & 5.81 \\
ADIW-KL     & \textbf{77.86 (0.11)} & 8.39  & \textbf{74.25 (0.30)} & 5.83 \\
ADIW-LS     & \textbf{76.93 (1.25)} & 7.67  & \textbf{74.13 (0.55)} & 5.70 \\
ADIW-WA     & \textbf{77.37 (0.43)} & 7.64  & \textbf{73.14 (1.50)} & 7.79 \\
\bottomrule
\end{tabular}
}
\vspace{4pt}

\footnotesize
\raggedright
Time denotes the training time in seconds. Results are reported as mean accuracy (standard deviation) over the last ten epochs (3 trials).
Best and statistically comparable methods (\textit{t}-test at 1\% level) are shown in bold.
\end{table}

\subsection{Experimental Results}
\label{sec:performance}
In this section, we show the empirical results on benchmark datasets under class-prior shift and label noise, respectively. 

\subsubsection{Class-prior Shift Experiments} 
Table~\ref{tab:adiw_ci} reports mean accuracy and average per-epoch training time 
under two class-prior shift settings: $\rho=50$ and $\rho=100$. 
Across both settings, all ADIW variants achieve accuracy comparable to DIW while consistently requiring substantially less training time per epoch.

\begin{figure*}[t]
    \centering
    \begin{minipage}[c]{0.03\textwidth}~\end{minipage}
    \hspace{2pt}
    \begin{minipage}[c]{0.95\textwidth}
    \centering
    \begin{minipage}[c]{0.333\linewidth}\centering \hspace{8pt} 0.3 Pair \end{minipage}%
    \begin{minipage}[c]{0.333\linewidth}\centering \hspace{10pt} 0.4 Symmetric \end{minipage}%
    \begin{minipage}[c]{0.333\linewidth}\centering \hspace{10pt} 0.5 Symmetric \end{minipage}
    \end{minipage}\\
    \vspace{2pt}

    \begin{minipage}[c]{0.03\textwidth}\flushright \rotatebox{90}{\textit{Fashion-MNIST}} \end{minipage} 
    \hspace{2pt}
    \begin{minipage}[c]{0.95\textwidth}
        \includegraphics[width=0.333\linewidth]{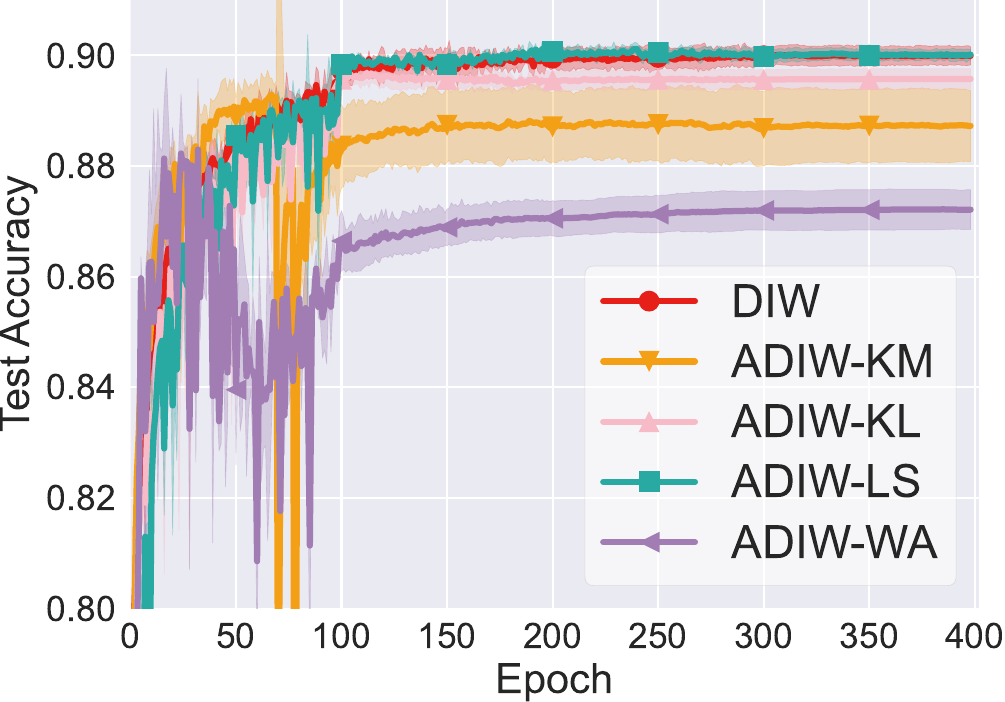}%
        \includegraphics[width=0.333\linewidth]{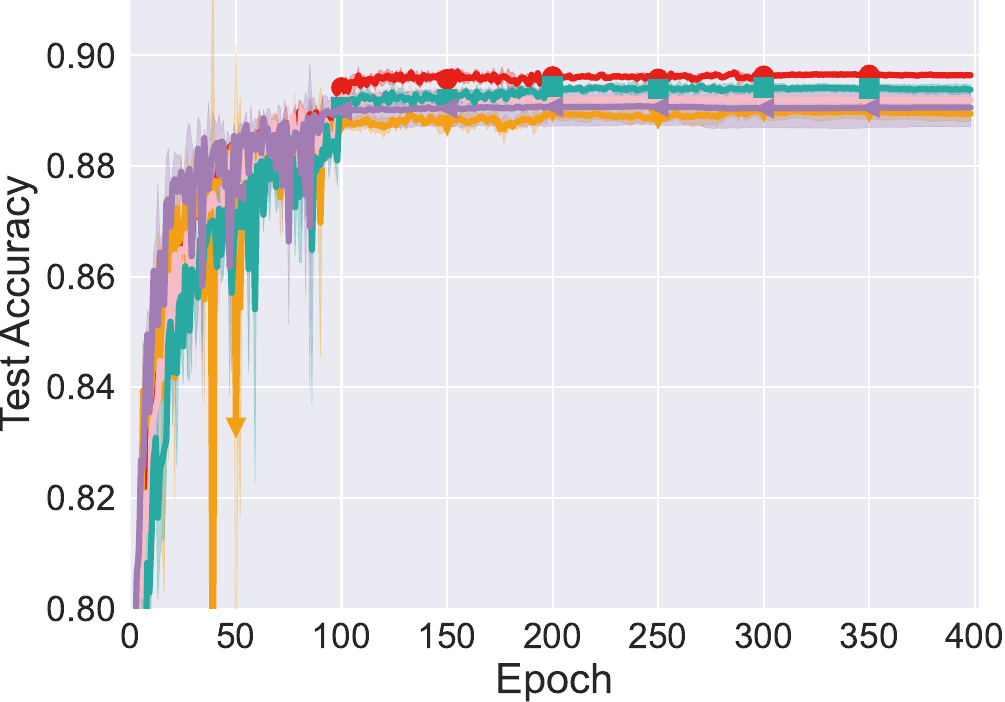}%
        \includegraphics[width=0.333\linewidth]{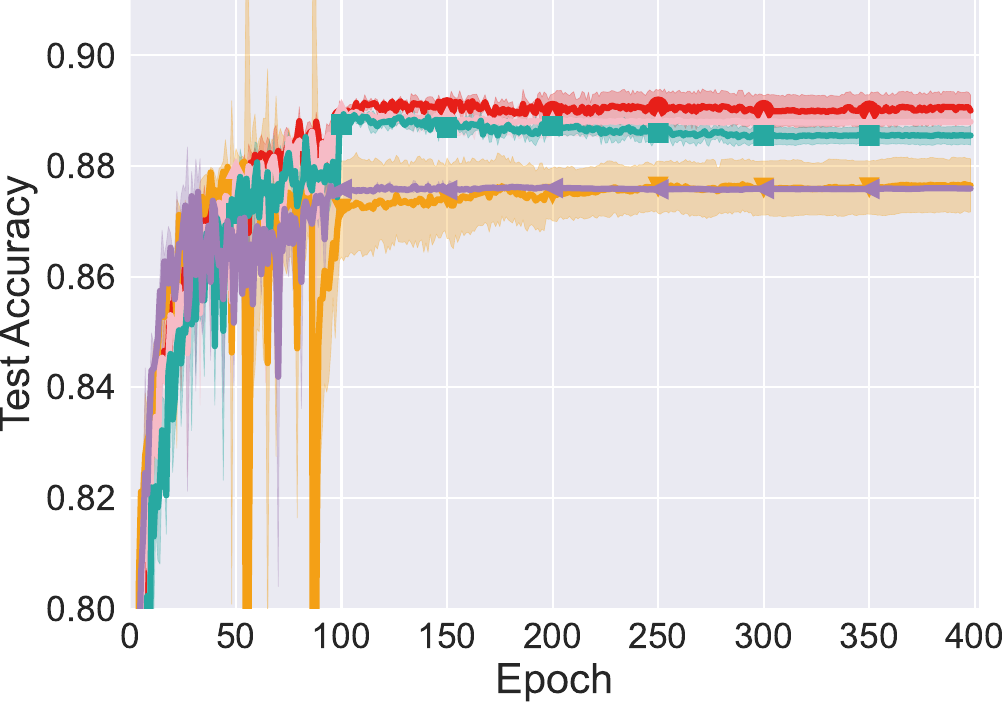}
    \end{minipage}\\
    \begin{minipage}[c]{0.03\textwidth}\flushright \rotatebox{90}{\textit{CIFAR-10}} \end{minipage}
    \hspace{2pt}
    \begin{minipage}[c]{0.95\textwidth}
        \includegraphics[width=0.333\linewidth]{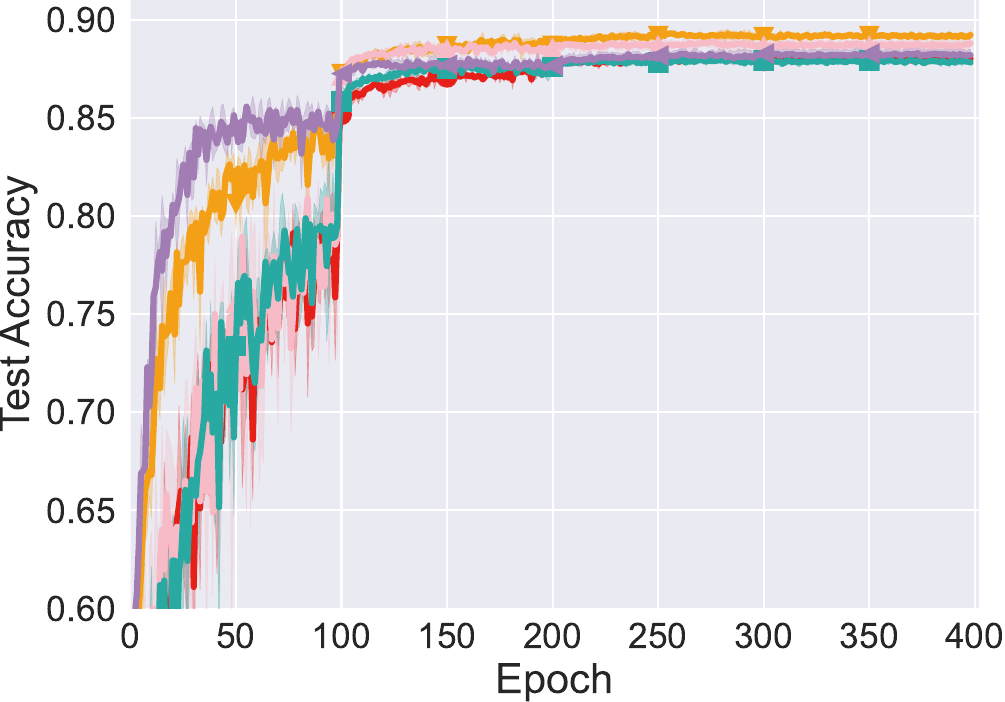}%
        \includegraphics[width=0.333\linewidth]{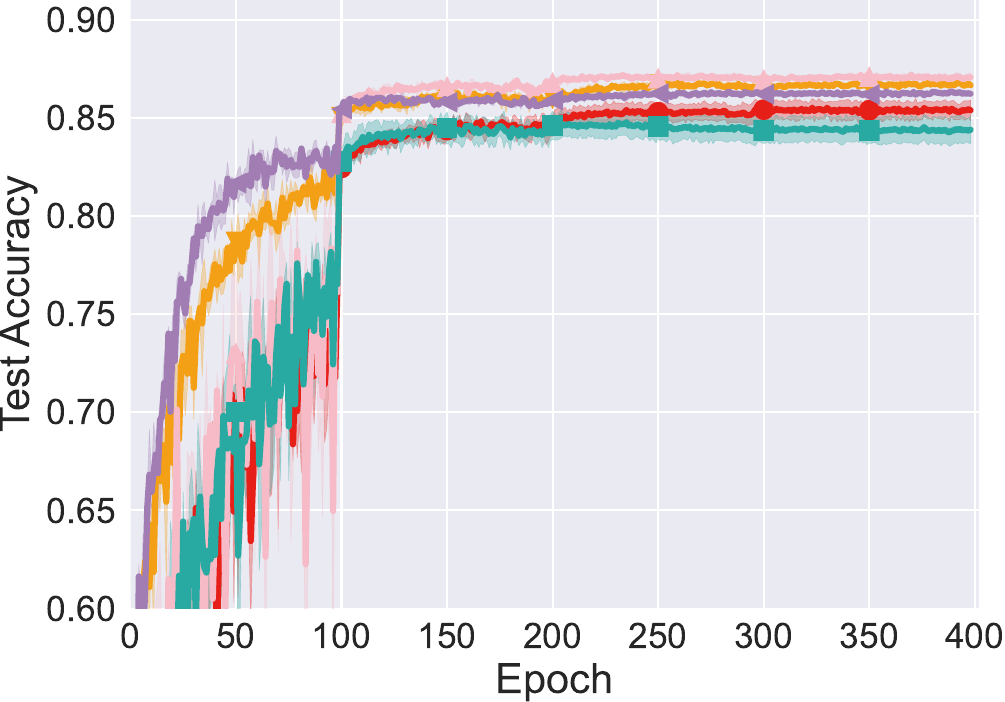}%
        \includegraphics[width=0.333\linewidth]{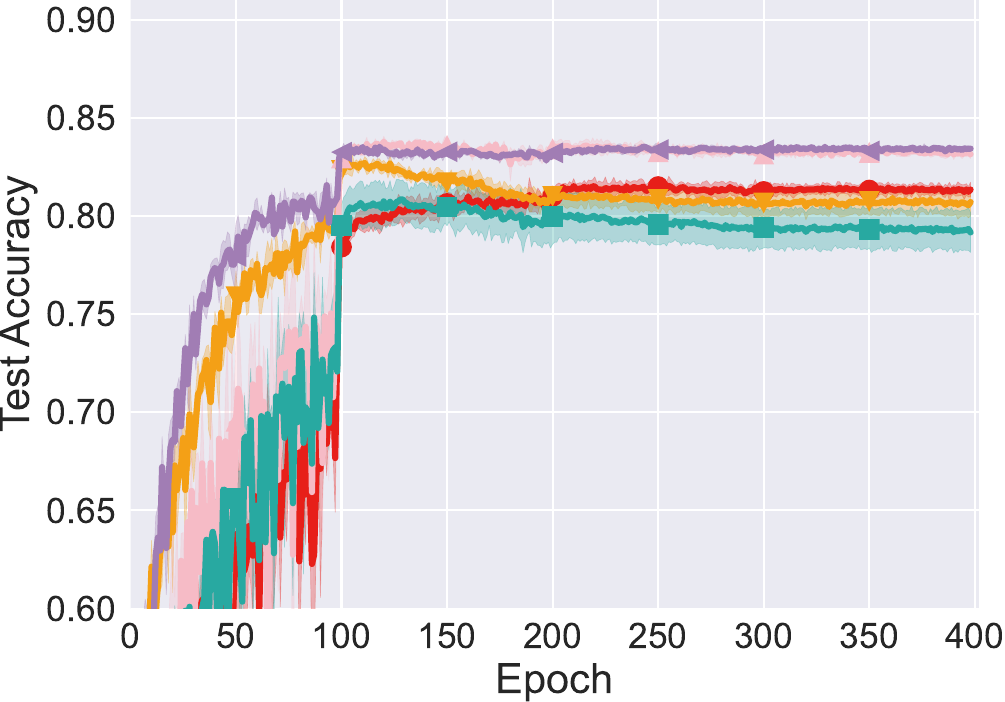}
    \end{minipage}\\
    \begin{minipage}[c]{0.03\textwidth}\flushright \rotatebox{90}{\textit{CIFAR-100}} \end{minipage}
    \hspace{2pt}
    \begin{minipage}[c]{0.95\textwidth}
        \includegraphics[width=0.333\linewidth]{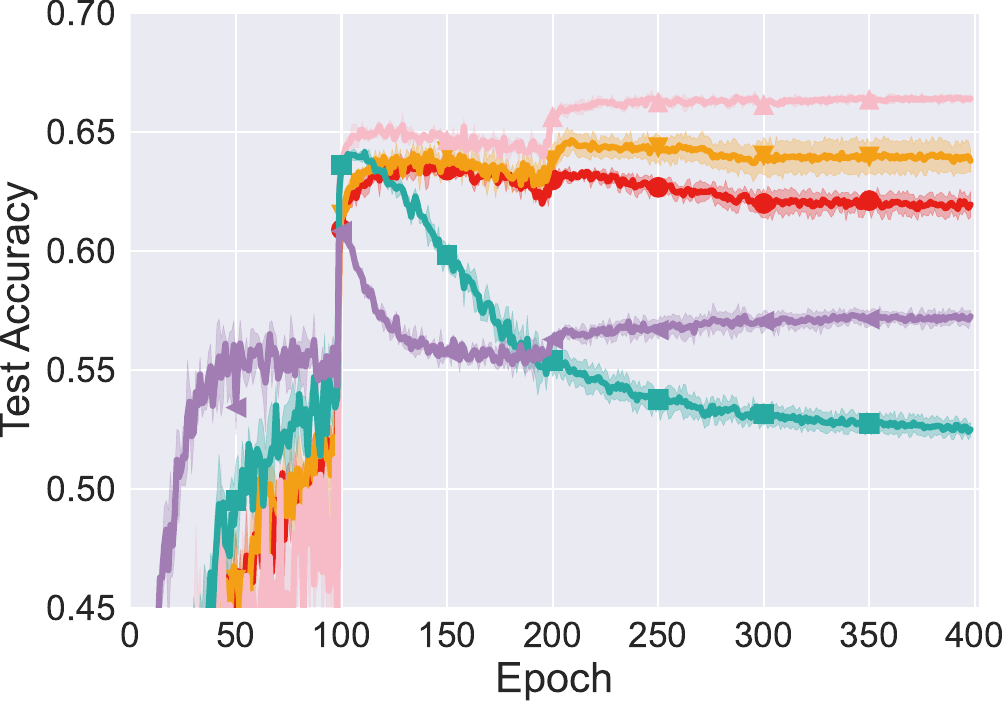}%
        \includegraphics[width=0.333\linewidth]{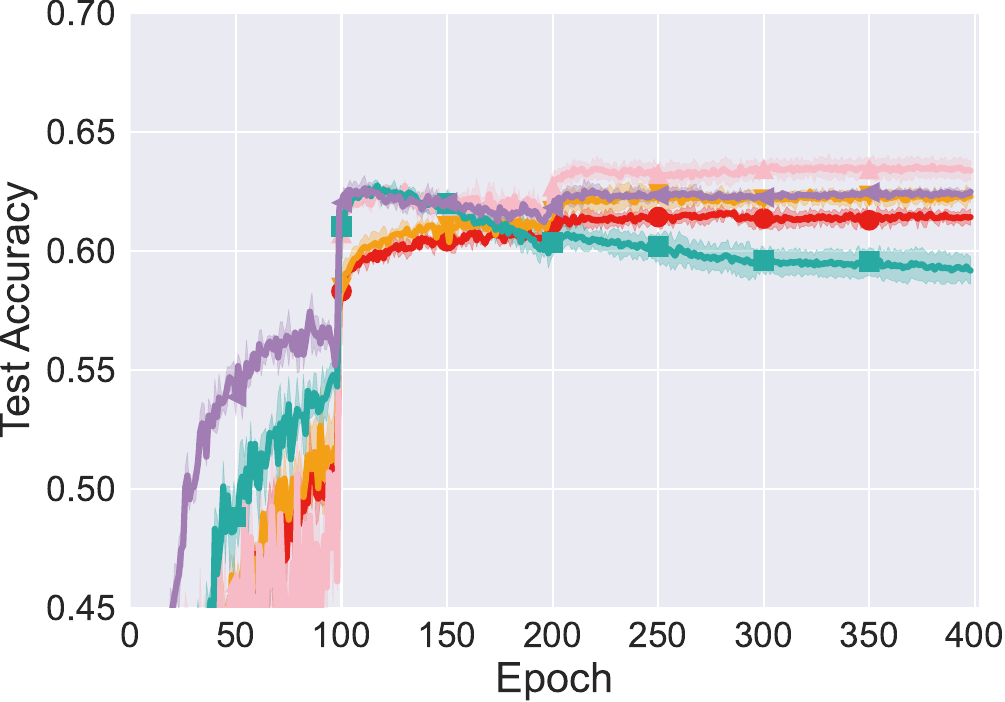}%
        \includegraphics[width=0.333\linewidth]{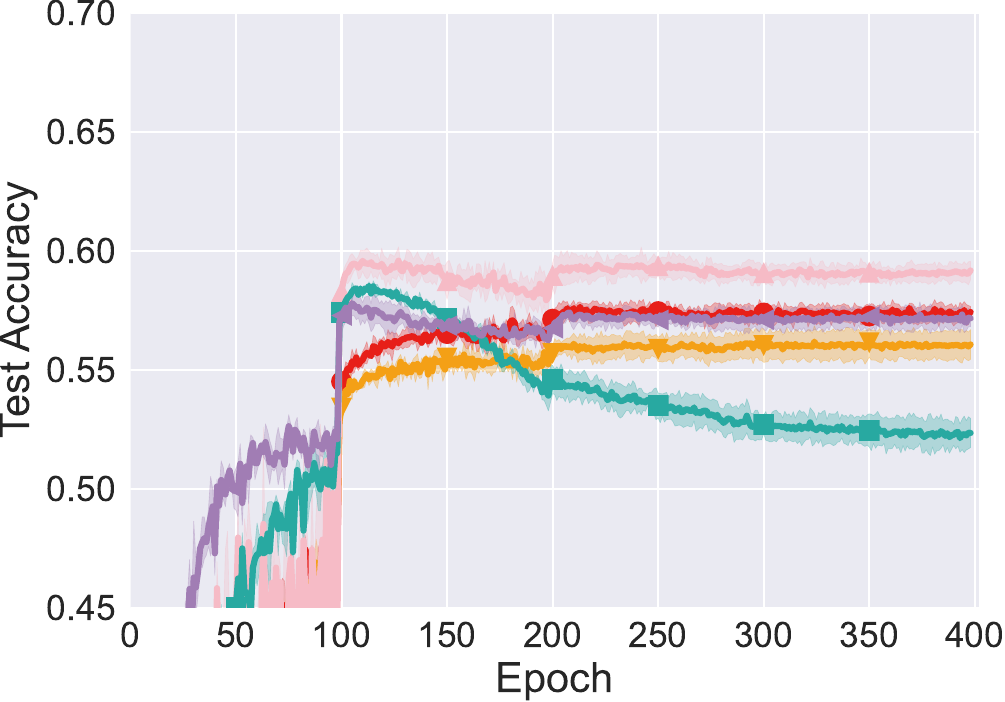}
    \end{minipage}
    \caption{
    Experimental results on Fashion-MNIST and CIFAR-10/100 under label noise (3 trials).
    }
    \label{fig:ADIW_performance_variant}
\end{figure*}

\begin{figure*}[t]
    \centering
    \makebox[\textwidth][c]{%
    \hspace{20pt}
    \begin{minipage}{0.245\textwidth}\centering \small Fashion-MNIST \end{minipage}%
    \begin{minipage}{0.245\textwidth}\centering \small CIFAR-10 \end{minipage}%
    \begin{minipage}{0.245\textwidth}\centering \small CIFAR-100 \end{minipage}%
    \begin{minipage}{0.245\textwidth}\centering \small ImageNet-1K \end{minipage}
    }

    \vspace{2pt}

    \makebox[\textwidth][c]{%
    \hspace{10pt}
    \begin{minipage}{0.245\textwidth}
        \includegraphics[width=\linewidth]{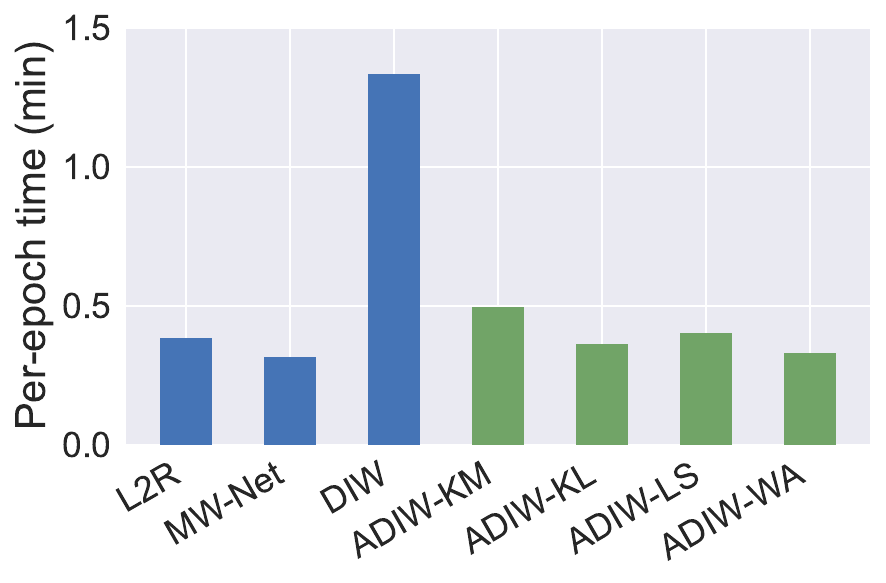}
    \end{minipage}%
    \begin{minipage}{0.245\textwidth}
        \includegraphics[width=\linewidth]{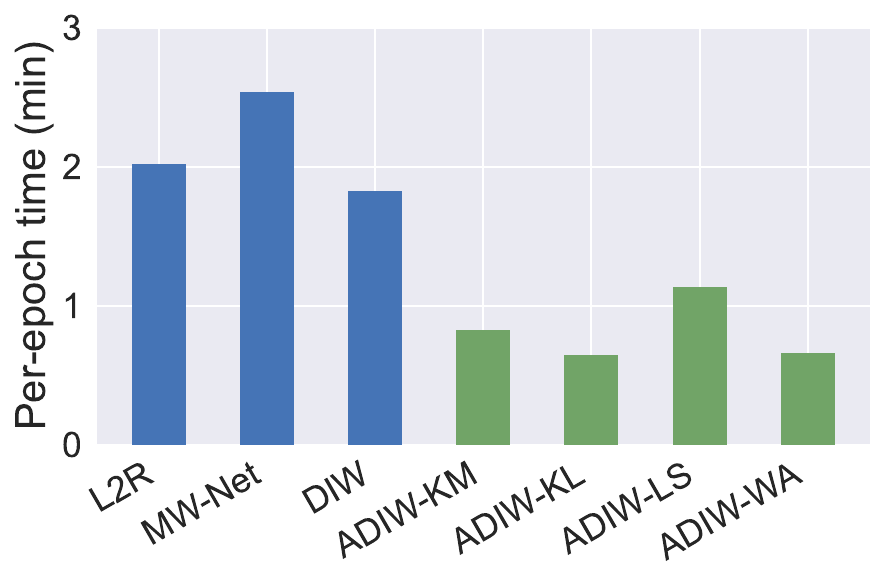}
    \end{minipage}%
    \begin{minipage}{0.245\textwidth}
        \includegraphics[width=\linewidth]{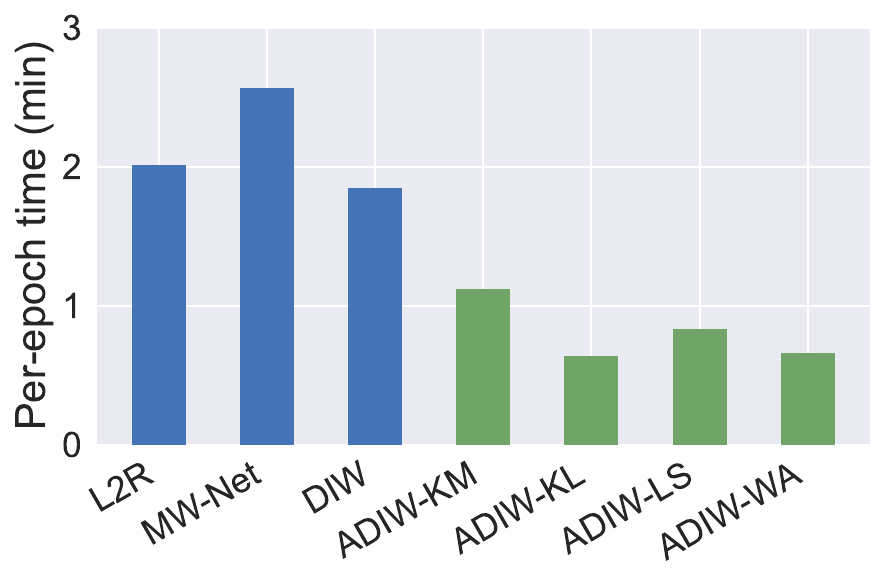}
    \end{minipage}%
    \begin{minipage}{0.245\textwidth}
        \includegraphics[width=\linewidth]{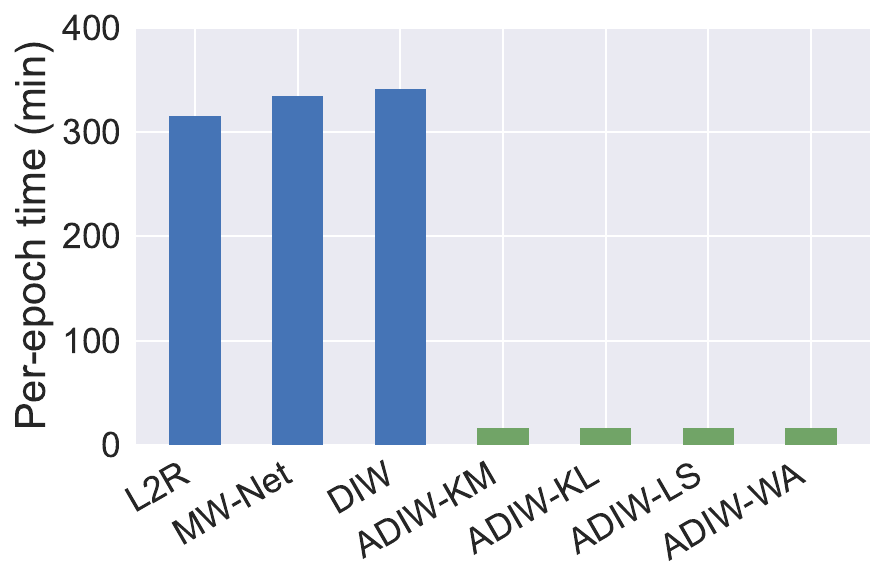}
    \end{minipage}
    }

    \caption{Training time (minutes) under 0.4 symmetric label noise.}
    \label{fig:ADIW_time}
\end{figure*}

\subsubsection{Label-noise Experiments}
Figure~\ref{fig:ADIW_performance_variant} shows classification accuracy of DIW and ADIW variants on Fashion-MNIST, CIFAR-10, and CIFAR-100 under 0.3 pair, 0.4 and 0.5 symmetric label noise, and a complete accuracy summary across all baselines is provided in Table~\ref{tab:ADIW_acc}.
Overall, the ADIW methods achieve performance that is competitive with, or superior to, DIW and other baselines.
Both DIW and ADIW also tend to be robust as the label noise rate increases.
An exception is that ADIW-WA underperforms under 0.3 pair-flip noise on Fashion-MNIST and CIFAR-100, and ADIW-LS exhibits lower robustness than other methods on the more challenging CIFAR-100 dataset.

\begin{figure}
    \centering
    \begin{minipage}{\columnwidth}
    \begin{minipage}[c]{0.55\columnwidth}\centering\small DIW \end{minipage}%
    \hspace{2pt}
    \begin{minipage}[c]{0.4\columnwidth}\centering\small ADIW-KL \end{minipage}%
    \\
    \begin{minipage}[c]{\columnwidth}
        \hspace{5pt}
        \includegraphics[width=0.47\columnwidth]{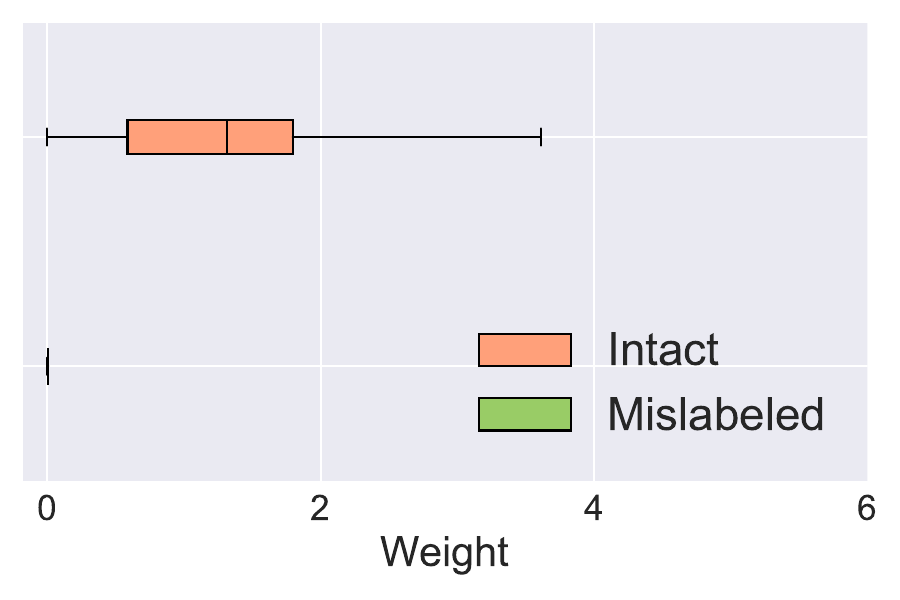}
        \hspace{3pt}
        \includegraphics[width=0.47\columnwidth]{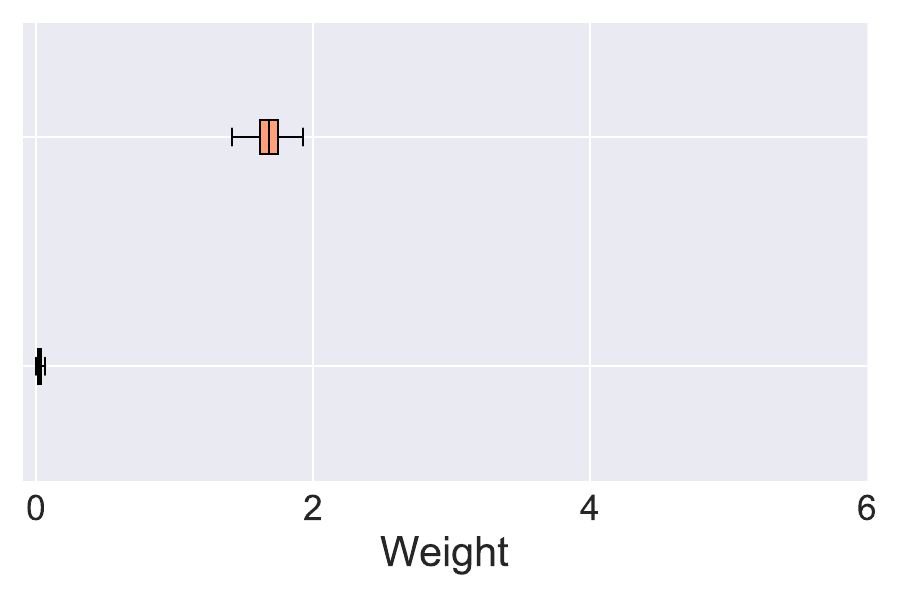}
    \end{minipage}\\
    \begin{minipage}[c]{\columnwidth}
        \includegraphics[width=0.495\columnwidth]{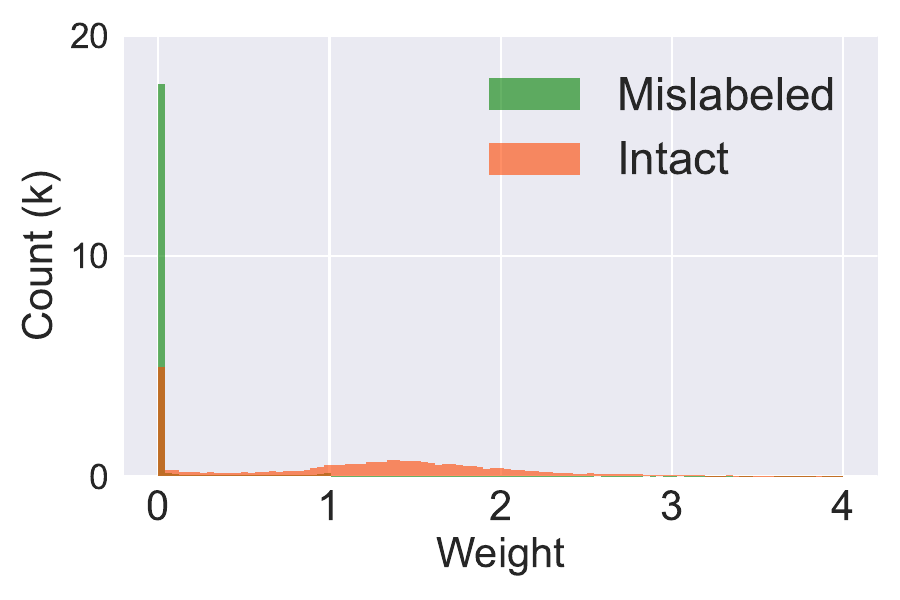}
        \includegraphics[width=0.495\columnwidth]{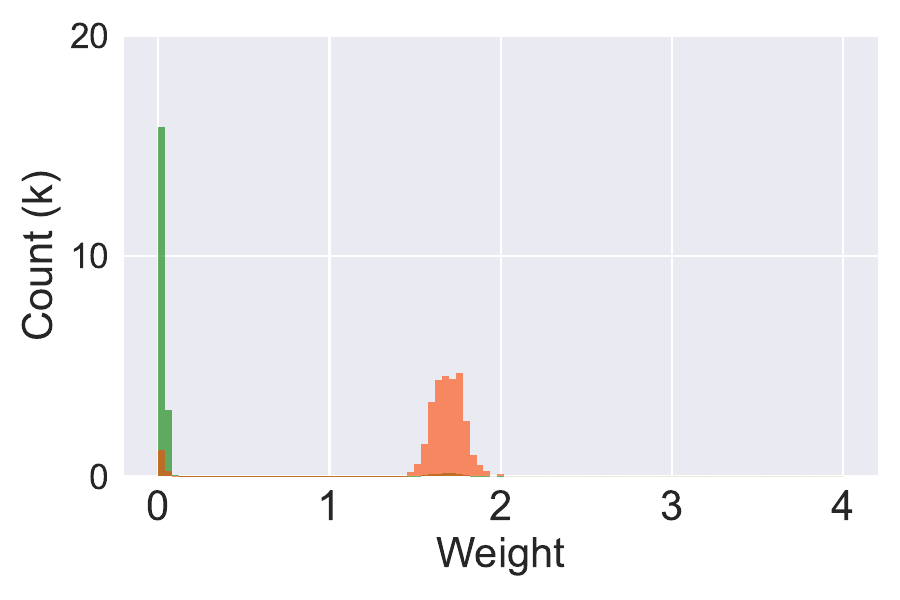}
    \end{minipage}\\
    \caption{Statistics of weight distributions on CIFAR-10 under 0.4 symmetric label noise (top: box plots; bottom: histogram plots).}
    \label{fig:ADIW_wdist_c10_sota}
\end{minipage}
\end{figure}

\begin{table}
\caption{Top-1/5 accuracy (\%) on ImageNet-1K under label noise.}
\resizebox{0.99\columnwidth}{!}{%
\label{tab:imagenet}
\begin{tabular}{c|ccccccccc}
    \toprule
    & Val-Only & Uniform & ADIW-KM & ADIW-KL & ADIW-LS & ADIW-WA \\
    \midrule
    Top-1 & \makecell{11.60 \\ (0.03)} & \makecell{57.66 \\ (0.20)} & \makecell{64.73 \\ (0.08)} & \makecell{\textbf{65.86} \\ \textbf{(0.16)}} & \makecell{64.16 \\ (0.07)} & \makecell{63.29 \\ (0.12)} \\
    Top-5 & \makecell{25.77 \\ (0.27)} & \makecell{78.89 \\ (0.10)} & \makecell{\textbf{84.66} \\ \textbf{(0.03)}} & \makecell{\textbf{84.82} \\ \textbf{(0.04)}} & \makecell{\textbf{84.65} \\ \textbf{(0.08)}} & \makecell{\textbf{83.59} \\ \textbf{(0.18)}} \\
    \bottomrule
\end{tabular}}

\vspace{4pt}

\footnotesize
\raggedright
Results are reported as mean accuracy (standard deviation) over the last ten epochs (3 trials).
Best and statistically comparable methods (\textit{t}-test at the 1\% significance level) are shown in bold.

\textit{Note:}
L2R, MW-Net, and DIW are computationally expensive and unable to complete training on ImageNet-1K.
Their training times are reported separately in Figure~\ref{fig:ADIW_time} and Table~\ref{tab:efficiency} in Appendix~\ref{app:time}.

\end{table}

{
Table~\ref{tab:imagenet} reports results on ImageNet-1K validation data under 0.4 symmetric label noise.
Baselines relying on differentiating through model parameters or quadratic programming solvers
(i.e., L2R, MW-Net, and DIW) are computationally infeasible at this scale and are thus omitted.
In contrast, ADIW enables efficient end-to-end GPU training and substantially outperforms naive baselines (i.e., Val-Only and Uniform). 
For ADIW, ADIW-KL achieves the best top-1 accuracy, while all variants achieve comparable top-5 accuracy.}

\begin{table*}
\caption{Stage-wise breakdown of total CPU and CUDA time on ImageNet-1K (seconds; percentages in parentheses).}
\label{tab:profiling_imagenet}
\centering
\resizebox{\textwidth}{!}{%
\begin{tabular}{l|ll|ll|ll|ll}
    \toprule
    \multirow{2}{*}{Stage} & \multicolumn{2}{c|}{DIW} & \multicolumn{2}{c|}{DIW\_Val} & \multicolumn{2}{c|}{ADIW\_CPU} & \multicolumn{2}{c}{ADIW\_GPU}\\
    \cmidrule(lr){2-3} \cmidrule(lr){4-5} \cmidrule(lr){6-7} \cmidrule(lr){8-9}
    & CPU (\%) & CUDA (\%) & CPU (\%) & CUDA (\%) & CPU (\%) & CUDA (\%) & CPU (\%) & CUDA (\%) \\
    \midrule
    Fetch tr data & 0.77 (1.02) & 0.16 (0.47) & 0.79 (3.47) & 0.14 (0.63) & 0.80 (10.65) & 0.16 (2.04) & 2.46 (71.96) & 0.17 (8.20) \\
    Fetch val data & 43.31 (57.07) & 0.14 (0.43) & 0.30 (1.31) & 0.14 (0.62) & 0.29 (3.83) & 0.15 (1.97) & 0.28 (8.09) & 0.16 (7.93) \\
    Forward tr data & 0.07 (0.09) & 0.57 (1.72) & 0.05 (0.22) & 0.54 (2.35) & 0.06 (0.82) & 0.52 (6.60) & 0.09 (2.52) & 0.52 (25.70) \\
    Forward val data & 0.09 (0.11) & 0.59 (1.78) & 0.06 (0.25) & 0.54 (2.32) & 0.06 (0.78) & 0.51 (6.57) & 0.09 (2.74) & 0.52 (25.66) \\
    Get tr loss & 0.01 (0.02) & 0.00 (0.00) & 0.00 (0.02) & 0.00 (0.00) & 0.00 (0.05) & 0.00 (0.00) & 0.00 (0.13) & 0.00 (0.01) \\
    Get val loss & 0.01 (0.02) & 0.00 (0.00) & 0.00 (0.02) & 0.00 (0.00) & 0.00 (0.05) & 0.00 (0.00) & 0.00	(0.12) & 0.00 (0.01) \\
    \rowcolor{lightgray}
    Estimate weights & 31.18 (41.07) & 30.97 (93.44) & 21.32 (93.28) & 21.11 (91.22) & 6.06 (80.56) & 5.86 (75.03) & 0.21 (6.22) & 0.04 (2.17) \\
    Weight tr loss & 0.21 (0.28) & 0.70 (2.12) & 0.13 (0.58) & 0.66 (2.83) & 0.09 (1.25) & 0.60 (7.68) & 0.10 (2.81) & 0.61 (29.89) \\
    Backward data & 0.20 (0.27) & 0.00 (0.00) & 0.17 (0.74) & 0.00 (0.00) & 0.12 (1.60) & 0.00 (0.00) & 0.15 (4.44) & 0.00 (0.00) \\
    Update model & 0.04 (0.06) & 0.01 (0.03) & 0.03 (0.13) & 0.01 (0.04) & 0.03 (0.41) & 0.01 (0.11) & 0.03 (0.95) & 0.01 (0.42) \\
    \midrule
    Total & 75.90 (100.00) & 33.14 (100.00) & 22.86 (100.00) & 23.15 (100.00) & 7.52 (100.00) & 7.81 (100.00) & 3.42 (100.00) & 2.04 (100.00) \\
    \bottomrule
\end{tabular}}

\vspace{4pt}

\footnotesize
\raggedright
The results are averaged over four profiling windows at epochs 1, 5, 10 and 20. Each profiling window consists of 10 iterations measured by PyTorch Profiler. ``tr'' and ``val'' denote training and validation data. The ``Estimate weights'' stage is highlighted as the primary target for efficiency improvement.

\end{table*}

In addition, Figure~\ref{fig:ADIW_time} visualizes per-epoch end-to-end training time under 0.4 symmetric label noise, with complete results under 0.3 pair and 0.5 symmetric label noise reported in Table~\ref{tab:efficiency} in Appendix~\ref{app:time}.
On the small-scale datasets (Fashion-MNIST, CIFAR-10, and CIFAR-100), ADIW reduces per-epoch training time by approximately 20–75\% compared to DIW.
The efficiency gain of ADIW is even more pronounced on ImageNet-1K, where ADIW achieves over 95\% reduction.
These results demonstrate that ADIW scales efficiently from small to large datasets, whereas existing reweighting-based methods such as L2R, MW-Net, and DIW become less practical on the large-scale datasets.

Finally, we visualized the learned weight distributions on noisy-labeled CIFAR-10 in Figures~\ref{fig:ADIW_wdist_c10_sota},
where the weights of intact samples are shown in orange and those of mislabeled samples in green. 
Both DIW and ADIW-KL assign near-zero weights to mislabeled data; however, ADIW-KL yields lower variance among intact-sample weights than DIW, indicating more stable weighting.
Results for all ADIW variants are provided in Figure~\ref{fig:ADIW_wdist_c10_sota_full} in Appendix~\ref{app:w_dist_full}.


\subsection{Ablation Study}
To gain deeper insight into the design and behavior of ADIW, we conducted several ablation studies.
First, we performed fine-grained profiling of multiple implementations to trace the evolution from DIW to ADIW and quantify the resulting efficiency gains.
{
Second, we investigated the effect of the number of validation samples used in the weight estimation.}
Finally, we examined the impact of adopting different data transformation choices within ADIW.

\subsubsection{Key-stage Profiling from DIW to ADIW}
\label{sec:ab_profiling}
To examine the efficiency improvements from DIW to ADIW, we profiled different implementations across key stages using the PyTorch Profiler (see Appendix~\ref{app:ab} for configuration details).
We considered the following four implementations:
\begin{itemize}
\item DIW: the original DIW implementation;
\item DIW\_Val: DIW with the validation loader defined outside the training loop as described in Section~\ref{sec:algo};
\item ADIW\_CPU: PGD-based WE performed on CPU;
\item ADIW\_GPU: PGD-based WE performed on GPU (i.e., the proposed method).
\end{itemize}

Table~\ref{tab:profiling_imagenet} presents stage-wise profiling results on ImageNet-1K under 0.4 symmetric label noise. 
For DIW, validation data fetching and WE are the main bottlenecks. 
DIW\_Val substantially reduces the cost of validation data fetching, but WE remains the dominant contributor to both CPU and CUDA time.
ADIW\_CPU replaces solving the quadratic program to convergence with lightweight PGD-based WE on CPU, achieving about 72\% reductions in both CPU and CUDA time, though WE remains the main bottleneck.
ADIW\_GPU performs PGD-based WE entirely on GPU, reducing total CPU and CUDA time by approximately 95\% and 94\%, respectively (compared with DIW), with WE contributing only 6\% of CPU time and 2\% of CUDA time.

{
These results demonstrate how ADIW\_GPU evolves from DIW through progressively improved designs toward fully GPU-accelerated and scalable training.}
Additional profiling results on CIFAR-10 are reported in Appendix~\ref{sec:profiling_c10}.

\subsubsection{Impact of Validation Sample Size}
\label{sec:ab_validation_size}
{
We conducted experiments on CIFAR-10 under 0.4 symmetric label noise with different validation set sizes (3 trials).
Results in Table~\ref{tab:val_size} show that ADIW-KL and ADIW-WA remain highly robust to the validation set size, with only minor performance variations when reducing the number of validation samples from 1000 to 10. 
In contrast, DIW exhibits noticeable performance degradation under extremely small validation sets, e.g., when reducing the validation size from 100 to 10 samples. 
Overall, ADIW variants tend to be more robust to limited validation data, although ADIW-LS shows relatively larger degradation while still outperforming DIW. 
}

\subsubsection{Comparison of Data Transformations}
\label{sec:ab_representation}
As discussed in Section~\ref{sec:algo}, the data transformation in ADIW can be either hidden-layer-output transformation (-F) or loss-value transformation (-L). 
We compared their performance on noisy-labeled CIFAR-10 in Table~\ref{tab:adiw_ln_rep},
where the time denotes the average end-to-end training time per epoch in seconds.
Overall, the -L transformation achieves strong performance across all divergence choices, providing computational efficiency and a simpler algorithmic design.
These advantages make it the preferred configuration adopted in all preceding experiments.


\begin{table}[t]
\caption{Effect of validation set size on CIFAR-10 under label noise.}
\label{tab:val_size}
\centering
\resizebox{\columnwidth}{!}{%
\begin{tabular}{l|ccccc}
\toprule
Size & DIW & ADIW-KM & ADIW-KL & ADIW-LS & ADIW-WA \\
\midrule
10   & 61.04 (1.10) & 82.06 (1.87) & 86.14 (0.40) & 65.00 (1.29) & 86.32 (0.22) \\
100  & 83.13 (0.56) & 86.29 (0.67) & 86.85 (0.29) & 82.82 (1.02) & 86.52 (0.08) \\
1000 & 85.38 (0.45) & 86.70 (0.06) & 87.05 (0.18) & 84.38 (0.68) & 86.27 (0.06) \\
\bottomrule
\end{tabular}}

\end{table}

\begin{table}[t]
\caption{Comparison of data transformation choices in ADIW on CIFAR-10.}
\label{tab:adiw_ln_rep}
\centering
\resizebox{\columnwidth}{!}{%
\begin{tabular}{l|l|cc|cc|cc}
\toprule
\multirow{2}{*}{Trans.} & \multirow{2}{*}{ADIW} & \multicolumn{2}{c|}{0.3 pair} & \multicolumn{2}{c|}{0.4 symmetric} & \multicolumn{2}{c}{0.5 symmetric} \\
\cmidrule{3-8}
& & Acc. (std) & Time & Acc. (std) & Time & Acc. (std) & Time \\
\midrule
\multirow{4}{*}{-F}
& -KM & \textbf{88.57 (0.17)} & 42.21 & 81.30 (0.21) & 42.40 & 75.37 (0.74) & 42.36 \\
& -KL & \textbf{87.92 (0.14)} & 42.24 & \textbf{85.50 (0.33)} & 41.37 & \textbf{81.31 (0.54)} & 42.21 \\
& -LS & \textbf{87.42 (0.27)} & 42.46 & 81.30 (0.31) & 42.42 & 75.94 (0.12) & 42.55 \\
& -WA & 67.54 (0.40) & 75.74 & 63.09 (0.68) & 75.79 & 56.26 (0.55) & 75.80 \\
\midrule
\multirow{4}{*}{-L}
& -KM & \textbf{89.16 (0.15)} & 55.04 & \textbf{86.70 (0.06)} & 49.71 & \textbf{80.64 (0.67)} & 38.73 \\
& -KL & \textbf{88.71 (0.27)} & 38.48 & \textbf{87.05 (0.18)} & 38.82 & \textbf{83.20 (0.20)} & 38.61 \\
& -LS & \textbf{87.86 (0.13)} & 69.59 & \textbf{84.38 (0.68)} & 67.94 & \textbf{79.26 (1.03)} & 68.53 \\
& -WA & \textbf{88.20 (0.30)} & 39.59 & \textbf{86.27 (0.06)} & 39.66 & \textbf{83.41 (0.12)} & 39.79 \\
\bottomrule
\end{tabular}}

\vspace{4pt}

\footnotesize
\raggedright
``Trans.'' denotes the data transformation type, where ``-F'' and ``-L'' indicate hidden-layer-output and loss-value transformations, respectively. The results are reported as mean accuracy (standard deviation) over the last ten epochs (3 trials).
Best and statistically comparable methods in accuracy (\textit{t}-test at the 1\% significance level) are shown in bold.

\end{table}

\section{Conclusions}
\label{sec:conclusion}
We analyzed the limitations of existing IW-based methods for joint distribution shift and proposed ADIW, a unified and efficient IW framework for deep learning. 
{By formulating weight estimation as a divergence-minimization problem, ADIW supports diverse weight estimation methods in a plug-and-play manner within a single implementation while improving efficiency through projected gradient descent updates.
Experiments demonstrated the effectiveness, scalability, and versatility of ADIW in handling distribution shifts.
}

Future work includes deeper theoretical analysis of the impact of different divergence measures within ADIW, as well as empirical investigations of how they affect performance across tasks such as reinforcement learning, time-series forecasting, and natural language processing. 
Additionally, 
future work may consider handling gradual distribution shifts, such as \emph{concept drift} \citep{lu2018learning}, which involve multiple sequential shifts rather than a single training-test distribution pair.

\bibliographystyle{IEEEtran}
\bibliography{example_paper}


 





\clearpage
\onecolumn
\appendices
\section{Discussion of IW and DA Approaches under Distribution Shift}
\label{app:ds}
This section briefly reviews distribution shift and representative approaches for handling it, including importance weighting and domain adaptation.

\textbf{Distribution Shift.} 
Joint distribution shift is the most general form of distribution shift, where the joint distribution $p(\bx,y)$ changes between the training and test data without additional assumptions, i.e., 
$\ptr(\bx,y)\neq\pte(\bx,y)$.
It can be specialized into the following cases under further assumptions \citep{Lu2023}:
\begin{itemize}
    \item Covariate shift: $p(\bx)$ changes while $p(y\mid \bx)$ remains unchanged;
    \item Class-prior shift: $p(y)$ changes while $p(x\mid y)$ remains unchanged;
    \item Class-posterior shift (i.e., label noise): $p(y\mid \bx)$ changes while $p(\bx)$ remains unchanged;
    \item Class-conditional shift: $p(\bx\mid y)$ changes while $p(y)$ is fixed. 
\end{itemize}

\textbf{Importance Weighting (IW).} 
As introduced in the main text, IW and IW-based methods can address the general joint distribution shift without assuming a specific shift form a priori. 
They typically rely on a small labeled validation set from the test distribution to estimate importance weights for the training data. 
Moreover, IW-based methods assume that $p_\te(\bx, y)$ is absolutely continuous with respect to $p_\tr(\bx, y)$, i.e., $p_\te(\bx, y) > 0$ implies $p_\tr(\bx, y) > 0$. 
Consequently, IW methods are usually evaluated under controlled distribution shifts, where reliable estimation of importance weights is feasible.

\textbf{Domain Adaptation (DA).} 
DA takes a different approach from IW and can be broadly categorized into supervised DA (SDA) and unsupervised DA (UDA) \citep{pan2009survey,wang2018deep}.
In SDA, a small amount of labeled target-domain data is available, whereas UDA assumes access to abundant unlabeled target-domain data.
While SDA is closer to the setting of IW, much of the DA literature focuses on UDA, where the objective is to reduce domain discrepancy through representation alignment \citep{long2015learning,ganin2016domain,tzeng2017adversarial} or pseudo-label-based transfer \citep{lee2013pseudo,saito2017asymmetric}.
These approaches typically rely on implicit assumptions, such as limited change in the conditional distribution, and therefore address only a restricted subset of joint distribution shift.
When such assumptions are violated, e.g., under severe label noise, DA methods often become less effective.
Overall, DA is primarily designed for more challenging cross-domain shifts with substantial domain discrepancy.

In summary, IW and DA tackle distribution shift under fundamentally different assumptions, problem setups, and objectives.
IW focuses on addressing general joint distribution shift by reweighting training data using a small labeled validation set under controlled shifts, whereas DA typically addresses a restricted subset of joint shift using abundant unlabeled target-domain data to learn transferable or domain-invariant representations for large cross-domain discrepancies.

\section{Assumptions of Theorem~\ref{thm: convergence rate}}
\label{app:assumptions}
Here we list the necessary technical assumptions for Theorem~\ref{thm: convergence rate}.
\begin{assumption}[Lipschitz continuous gradient]
    \label{as:differentiable} 
    The learning objective $R_\tr(\btheta,w)$ is twice differentiable and has an $L$-Lipschitz continuous gradient for all $w$, i.e.,
    \begin{align*}
    -L I \preceq \nabla_{\btheta}^{2} R_\tr(\btheta,w) \preceq L I,
    \end{align*}
    where $I$ is the identity matrix.
\end{assumption}
This assumption does not require convexity of the learning objective $R_\tr(\btheta,w)$; it only ensures that the gradient does not change arbitrarily fast.

\begin{assumption}[Unbiased stochastic gradients with bounded variance]
    \label{as:noise}
    Let $\tilde{\nabla}_{\btheta}$ be the noisy gradient given by the gradient generator and we assume it satisfies:
    \begin{align*}
    \bE[\tilde{\nabla}_{\btheta} R_\tr(\btheta,w)]=\bE[{\nabla}_{\btheta} R_\tr(\btheta,w)],\
    \bE_{\tilde{\nabla}_{\btheta}}\left[\left\|\tilde{\nabla}_{\btheta} R_\tr(\btheta,w)-\nabla_{\btheta} R_\tr(\btheta,w)\right\|^{2}\right]\leq \sigma^{2}
    \end{align*}
    for some constant $\sigma^{2}$ and all $w$, where $\|\cdot\|$ denotes the $L_2$ norm.
\end{assumption}
This is a standard assumption in stochastic optimization \citep{ghadimi2013stochastic, garrigos2023handbook}.

\begin{assumption}
\label{as:Sensitivity}
We assume the WE subroutine satisfies two stability conditions:
\begin{enumerate}
\item[(W1)] (\emph{Nonexpansive gradient projection}) Let $\Pi_Q$ be the Euclidean projection
onto a closed convex set $Q$, then $$\|\Pi_Q(u)-\Pi_Q(v)\|_1 \le \|u-v\|_1,\; \forall~u,v \in \mathbb{R}^n,$$
where $\|\cdot\|_1$ denotes the $L_1$ norm.
\item[(W2)] (\emph{Learning-rate adjustment}) For each outer iteration $s$ and all inner iterations $t=0,\dots,T$,
\[ \big\|\eta_t \tilde{\nabla}_{\calW} \widehat J(\calW_{s,t};\btheta_s)\big\|_1 \le \left\|\alpha_s \tilde{\nabla}_{\btheta} R_\tr(\btheta_s,\calW_{s})\right\|^2, \]
where $\calW_{s,t}$ denotes the weight vector at inner step $t$ and outer step $s$, $\calW_s=\calW_{s-1,T}$, and $\eta_t$ and $\alpha_s$ are the inner-loop and outer-loop learning rates, respectively.
\end{enumerate}
\end{assumption}
W1 ensures that the projection operation does not increase the magnitude of the weight updates, ensuring numerical stability.
W2 enforces proportional scaling between the inner- and outer-loop updates, ensuring that the cumulative weight updates, measured in $L_1$ norm to capture the total effect across all samples, remain controlled relative to the classifier gradient, measured in squared $L_2$ norm to capture its total energy, while still allowing sufficient flexibility for WE.


\section{Proof of Theorem~\ref{thm: convergence rate}}
\label{sec:appendix_proof}
\begin{proof}
We analyze the ADIW algorithm with inner weight updates \[\calW_{s,t} := (w_{i,s,t}:=w(\bz_{i,s, t}^\tr))_{i=1}^{n_\tr} \in\mathbb{R}^{n_\tr}, \; t=0,\cdots, T,\] and outer classifier updates.
At each outer iteration $s$, the classifier parameters $\theta$ are updated by
\begin{align}
\btheta_{s+1} = \btheta_s-\alpha_s \tnabla R_\tr(\btheta_s,\calW_{s,0}), \; s=0,\cdots,S,
\end{align}
where $\alpha_s>0$ is the outer learning rate.
During the outer iteration $s$, at each inner iteration $t$, the weights are updated by projected gradient descent~(PGD):
\begin{equation}\label{eq:pgd-w-update}
\calW_{s,t+1} \;=\; \Pi_Q\!\Big(\calW_{s, t} - \eta_{t}\,\tilde{\nabla}_{\calW}\widehat J(\calW_{s, t};\btheta_s)\Big),
\end{equation}
where $\eta_{t}>0$ is the inner learning rate and $\Pi_Q$ denotes projection onto $Q$.

\paragraph{Bounding weight changes.} Fix an outer iteration $s$.
Given that the inner PGD runs $T$ steps,
\begin{align*}
    \calW_{s,0}\ \xrightarrow{\ \text{PGD step in Eq.\ } \eqref{eq:pgd-w-update}\ }\ \calW_{s,1}\ \xrightarrow{\ }\ \cdots\ \xrightarrow{\ }\ \calW_{s,T},
\end{align*}
we have the total weight change across the inner loop
\begin{align}
    \calW_{s,T}-\calW_{s,0}
= \sum_{t=0}^{T-1}\big(\calW_{s,t+1}-\calW_{s,t}\big).
\end{align}
Using the triangle inequality, we have
\begin{align}
    \|\calW_{s,T}-\calW_{s,0}\|_1
\le \sum_{t=0}^{T-1}\|\calW_{s,t+1}-\calW_{s,t}\|_1.
\end{align}
For each inner iteration,
\begin{align}
\|\calW_{s,t+1}-\calW_{s,t}\|_1
&= \big \| \Pi_Q(\calW_{s,t}-\eta_t\tilde{\nabla}_{\calW}\widehat J(\calW_{s,t};\btheta_s))
      -\Pi_Q(\calW_{s,t})\big\|_1 \notag\\
      &\le \big \|(\calW_{s,t}-\eta_t\tilde{\nabla}_{\calW}\widehat J(\calW_{s,t};\btheta_s))-\calW_{s,t}\big\|_1 \notag\\
      &= \,\big\|\eta_t \tilde{\nabla}_{\calW}\widehat J(\calW_{s,t};\btheta_s)\big\|_1 \notag\\
      &\le \alpha_s^{2} \|\tnabla R_\tr(\btheta_s,\calW_{s})\|^2,
\end{align}
where the first equation is due to Eq.~\eqref{eq:pgd-w-update} and the fact that at the current iteration, $\calW_{s,t}$ lies in $Q$ and thus $\calW_{s,t}=\Pi_Q(\calW_{s,t})$; the second inequality is due to the nonexpansiveness of $\Pi_Q$ (Assumption W1); and the last inequality is due to the stability of weight gradient (Assumption W2).
Then summing over \(t=0,\dots,T-1\) yields
\begin{align}
\label{eq:inner-loop-weight-bound}
\|\calW_{s,T}-\calW_{s,0}\|_1
\le T \alpha_s^{2} \,\|\tgrad\|^2,
\end{align}
where $\tgrad$ denotes $\tnabla R_\tr(\btheta_s,\calW_{s})$
Denote $\calW_{s} := \calW_{s-1,T}$, the total weight change across the outer loop satisfies
\begin{align}
\calW_{s+1}-\calW_s
&= \calW_{s,T}-\calW_{s-1,T}\notag\\
&=(\calW_{s, T}-\calW_{s,0})+(\calW_{s,0}-\calW_{s-1,T}).\label{eq:weight-change-across-outer-loop}
\end{align}
We note that the second term of Eq.~\eqref{eq:weight-change-across-outer-loop} is $0$ as $\calW_{s,0}$ is the starting point for the inner loop at outer iteration $s$, which is inherited from the last step $\calW_{s-1,T}$.
Combining the above results leads to
\begin{align}
    \|\calW_{s+1}-\calW_s\|_1 
    &\le \|\calW_{s,T}-\calW_{s,0}\|_1 + \|\calW_{s,0}-\calW_{s-1,T}\|_1 \notag\\
    &\le T \alpha_s^{2} \,\|\tgrad\|^2.\label{eq:weight-change}
\end{align}

\paragraph{Bounding risk decrease.}
By \emph{Taylor's theorem}, there exists $\btheta'$ on the line segment between $\btheta_{s}$ and $\btheta_{s+1}$ such that
\begin{align}
R_\tr(\btheta_{s+1},\calW_{s})&=R_\tr\left(\btheta_s-\alpha_s \tgrad,\calW_{s}\right)\notag\\
&=R_\tr(\btheta_s,\calW_{s})-\alpha_s \tgrad^{T}\nabla_{\theta} R_\tr(\btheta_s,\calW_{s})+\frac{\alpha_s^{2}}{2}\tgrad^{T}\nabla_{\theta}^{2} R_\tr(\btheta',\calW_{s})\tgrad\notag\\
&\leq R_\tr(\btheta_s,\calW_{s})-\alpha_s \tgrad^{T}\nabla_{\theta} R_\tr(\btheta_s,\calW_{s})+\frac{\alpha_s^{2}L}{2}\left\|\tgrad\right\|^{2},\label{eq:risk-decrease}
\end{align}
where the last inequality is due to the Lipschitz continuous gradient (Assumption~\ref{as:differentiable}).
Changing the weights from $\calW_s$ to $\calW_{s+1}$ gives
\begin{align}
R_\tr(\btheta_{s+1},\calW_{s+1})-R_\tr(\btheta_{s+1},\calW_{s})
&=\frac{1}{n_\tr}\sum_{i=1}^{n_\tr}w_{i,s+1}\ell(\boldsymbol{f}_{\btheta_{s+1}}(\bx_i),y_i)-\frac{1}{n_\tr}\sum_{i=1}^{n_\tr}w_{i,s}\ell(\boldsymbol{f}_{\btheta_{s+1}}(\bx_i),y_i)\notag\\
&=\frac{1}{n_\tr}\sum_{i=1}^{n_\tr}\left[(w_{i,s+1}-w_{i,s})\ell(\boldsymbol{f}_\mathrm{\theta_{s+1}}(\bx_i),y_i)\right]\notag\\
&\leq \frac{M}{n_\tr}\|\calW_{s+1}-\calW_{s}\|_1\notag\\
&\leq \frac{M T \alpha_s^{2}}{n_\tr} \,\|\tgrad\|^2,\label{eq:risk-decrease-change-weights}
\end{align}
where the first inequality applies the upper bound of the loss, and the last inequality is due to Eq.~\eqref{eq:weight-change}.
Combining Eq.~\eqref{eq:risk-decrease} and~\eqref{eq:risk-decrease-change-weights} gives
\begin{align}
R_\tr(\btheta_{s+1},\calW_{s+1})
\leq R_\tr(\btheta_s,\calW_{s})-\alpha_s \tgrad^{T}\grad+\left(\frac{\alpha_s^{2}L}{2}+\frac{M T \alpha_s^{2}}{n_\tr}  \right)\left\|\tgrad\right\|^{2},
\end{align}
where $\grad$ denotes $\nabla_{\theta} R_\tr(\btheta_s,\calW_{s})$. 

\paragraph{Convergence via telescoping sum.}
Taking the expected value gives us
\begin{align}
\bE[R_\tr(\btheta_{s+1},\calW_{s+1})|\btheta_{s}]
&\leq R_\tr(\btheta_s,\calW_{s})-\alpha_s\bE\left[\tgrad^{T}\grad\big|\btheta_{s}\right]+\left(\frac{\alpha_s^{2}L}{2}+\frac{M T \alpha_s^{2}}{n_\tr}  \right)\bE\left[\left\|\tgrad\right\|^{2}\Big|\btheta_{s}\right]\notag\\
&= R_\tr(\btheta_s,\calW_{s})-\alpha_s\left\|\grad\right\|^{2}+\left(\frac{\alpha_s^{2}L}{2}+\frac{M T \alpha_s^{2}}{n_\tr}  \right)\bE\left[\left\|\tgrad\right\|^{2}\Big|\btheta_{s}\right],
\end{align}
where the equality is due to the unbiased stochastic gradients (Assumption~\ref{as:noise}).
Since
\begin{align}
\bE\left[\left\|\tgrad\right\|^{2}\Big|\btheta_{s}\right]
&=\bE\left[\left\|\tgrad-\grad+\grad\right\|^{2}\Big|\btheta_{s}\right]\notag\\
&=\bE\left[\left\|\tgrad-\grad\right\|^{2}+\|\grad\|^{2}+2\left(\tgrad-\grad\right)^{T}\grad\Big|\btheta_{s}\right]\notag\\
\label{eq:sigma}%
&\leq \sigma^{2}+\|\grad\|^{2},
\end{align}
where the last inequality comes from the unbiased stochastic gradients and bounded variance (Assumption~\ref{as:noise}), then we have
\begin{align}
\bE[R_\tr(\btheta_{s+1},\calW_{s+1})|\btheta_{s}]
&\leq R_\tr(\btheta_s,\calW_{s})-\left(\alpha_s-\frac{\alpha_s^{2}L}{2}-\frac{M T \alpha_s^{2}}{n_\tr}  \right)\left\|\grad\right\|^{2}+\left(\frac{\alpha_s^{2}L}{2}+\frac{M T \alpha_s^{2}}{n_\tr}  \right)\sigma^{2}.
\end{align}
If we set $\alpha_s$ small enough such that $\alpha_s<\frac{n_\tr}{2MT+nL}$ for all $s$, this simplifies to
\begin{align}
\bE[R_\tr(\btheta_{s+1},\calW_{s+1})|\btheta_{s}]
&\leq R_\tr(\btheta_s,\calW_{s})-\frac{\alpha_s}{2}\|\grad\|^{2}+\left(\frac{\alpha_s^{2}L}{2}+\frac{M T \alpha_s^{2}}{n_\tr}  \right)\sigma^{2}.
\end{align}
Now taking the full expectation,
\begin{align}
\bE[R_\tr(\btheta_{s+1},\calW_{s+1})]
&\leq \bE[R_\tr(\btheta_s,\calW_{s})]-\frac{\alpha_s}{2}\bE\left[\left\|\grad\right\|^{2}\right]+\left(\frac{\alpha_s^{2}L}{2}+\frac{M T \alpha_s^{2}}{n_\tr}  \right)\sigma^{2},
\end{align}
and then rearranging the terms,
\begin{align}
\label{eq:sum}%
\frac{1}{2}\bE\left[\left\|\grad\right\|^{2}\right]
\leq \frac{\bE[R_\tr(\btheta_s,\calW_{s})]-\bE[R_\tr(\btheta_{s+1},\calW_{s+1})]}{\alpha_s}+\left(\frac{\alpha_s L}{2}+\frac{M T \alpha_s}{n_\tr}  \right)\sigma^{2}.
\end{align}
Next summing up Eq.~\eqref{eq:sum} from $s=1$ to $S$,
\begin{align}
\frac{1}{2}\sum_{s=1}^{S}\bE\left[\left\|\grad\right\|^{2}\right]
&\leq\sum_{s=1}^{S}\frac{\bE[R_\tr(\btheta_s,\calW_{s})]}{\alpha_s}-\sum_{s=1}^{S}\frac{\bE[R_\tr(\btheta_{s+1},\calW_{s+1})]}{\alpha_s} + \left(\frac{L}{2}+\frac{M T}{n_\tr}  \right)\sigma^{2}\sum_{s=1}^{S}\alpha_s\notag\\
&=\sum_{s=1}^{S}\frac{\bE[R_\tr(\btheta_s,\calW_{s})]}{\alpha_s}-\sum_{s=2}^{S+1}\frac{\bE[R_\tr(\btheta_s,\calW_{s})]}{\alpha_{s-1}}+\left(\frac{L}{2}+\frac{M T}{n_\tr}  \right)\sigma^{2}\sum_{s=1}^{S}\alpha_s\notag\\
&=\sum_{s=2}^{T}\left(\frac{1}{\alpha_{s}}-\frac{1}{\alpha_{s-1}}\right)\bE[R_\tr(\btheta_s,\calW_{s})]+\frac{\bE[R_\tr(\btheta_1,\calW_{1})]}{\alpha_1}\notag\\
&\quad-\frac{\bE[R_\tr(\btheta_{S+1},\calW_{S+1})]}{\alpha_{S}}+\left(\frac{L}{2}+\frac{M T}{n_\tr}  \right)\sigma^{2}\sum_{s=1}^{S}\alpha_s\notag\\
&\leq \sum_{s=2}^{S}\left(\frac{1}{\alpha_{s}}-\frac{1}{\alpha_{s-1}}\right)BM+\frac{BM}{\alpha_1}+\left(\frac{L}{2}+\frac{M T}{n_\tr}  \right)\sigma^{2}\sum_{s=1}^{S}\alpha_s\notag\\
&=\frac{BM}{\alpha_{S}}+\left(\frac{L}{2}+\frac{M T}{n_\tr}  \right)\sigma^{2}\sum_{s=1}^{S}\alpha_s,
\end{align}
where the last inequality applies the upper bound of the weights and loss.
By setting $\alpha_s=\frac{c}{\sqrt{s}}$ (with $\frac{1}{\alpha_{0}} \triangleq 0$), we obtain
\begin{align}
\frac{1}{2}\sum_{s=1}^{S}\bE\left[\left\|\grad\right\|^{2}\right]
&\leq \frac{BM\sqrt{S}}{c}+\left(\frac{L}{2}\sigma^{2}+\frac{M T \sigma^{2}}{n_\tr}\right)2c\sqrt{S}\notag\\
\label{eq:goal}%
&= \left(\frac{BM}{c}+cL\sigma^{2}+\frac{2cMT\sigma^{2}}{n_\tr}\right)\sqrt{S},
\end{align}
where the inequality follows since $\sum_{s=1}^{S} \frac{1}{\sqrt{S}} \leq 2 \sqrt{S}$.
Therefore, let $m_S=\btheta_{s}$ with probability $\frac{1}{S}$ for all $s\in\{1, \ldots, S\}$, and the expected value of gradient at this point is
\begin{align}
\bE[\|\nabla R_\tr(m_S,\calW_{s})\|^{2}]
&=\sum_{s=1}^{S}\bE\left[\left\|\grad\right\|^{2}\right]\cdot \mathbf{P}\left(m_{S}=\btheta_{s}\right)\notag\\
&=\frac{1}{S}\sum_{s=1}^{S}\bE\left[\left\|\grad\right\|^{2}\right].
\end{align}
Substituting Eq.~\eqref{eq:goal} into this gives us
\begin{align}
\bE[\|\nabla R_\tr(m_S,\calW_{s})\|^{2}]
&\leq \frac{2}{\sqrt{S}}\left(\frac{BM}{c}+cL\sigma^{2}+\frac{2cMT\sigma^{2}}{n_\tr}\right),
\end{align}
which concludes the proof.
\end{proof}

\section{ADIW with Hidden-Layer-Output Transformation}
\label{sec:appendix-algo}
The algorithm of ADIW with the hidden-layer-output transformation is presented in Algorithm~\ref{alg:ADIW-F}.
Compared with Algorithm~\ref{alg:ADIW}, it involves substantially more steps, as it requires partitioning the training and validation data by class within each mini-batch and performing weight estimation in a class-wise manner. 
This results in fewer samples per estimation, potentially causing unstable weight estimation and requiring additional handling for cases with insufficient samples.
Furthermore, a class-prior ratio is needed to be estimated before training and applied to the computed weights.
Therefore, we recommend using the loss-value transformation as the default configuration in ADIW, as in Algorithm~\ref{alg:ADIW}.

\begin{algorithm}
    \caption{Accelerated dynamic importance weighting (with hidden-layer-output transformation, in a mini-batch).}
    \label{alg:ADIW-F}
    \begin{algorithmic}
        \REQUIRE a training and validation minibatch $\Str$ and $\Sval$ with index set          $\mathcal{I}_{\Str}$ and $\mathcal{I}_{\Sval}$,
         current model $\boldsymbol{f}_{\theta}$,
         weight vector $\calW$,
         WE objective $\widehat J$ with constraint set $Q$,  
         step size $\eta_t$, 
         number of WE iterations $T$, class-prior ratio $\{r_y^*\}_{y=1}^C$
    \end{algorithmic}
    \begin{minipage}[t]{\textwidth}\vspace{-8pt}
        \begin{algorithmic}[1]
            \STATE \texttt{forward} the input parts of $\Str$ \& $\Sval$
            \STATE \texttt{retrieve} the hidden-layer outputs as $\calZ^\tr$ \& $\calZ^\val$
            \STATE \texttt{apply} L2 normalization to $\calZ^\tr$ \& $\calZ^\val$: $\calZ^\tr \gets \text{L2}(\calZ^\tr)$, $\calZ^\val \gets \text{L2}(\calZ^\val)$
            \STATE \texttt{partition} $\calZ^\tr$ \& $\calZ^\val$ into $\{\calZ^\tr_y\}_{y=1}^C$ \& $\{\calZ^\val_y\}_{y=1}^C$
            \STATE \texttt{partition} $\mathcal{I}_{\Str}$ \& $\mathcal{I}_{\Sval}$ into $\{\mathcal{I}_{\Str, y}\}_{y=1}^C$ \& $\{\mathcal{I}_{\Sval, y}\}_{y=1}^C$
            \FOR{y = 1 to C}
                \STATE \texttt{retrieve} initial weights for class $y$: $\calW_1^y \gets \calW[\mathcal{I}_{\Str, y}]$
                \IF{$|\calZ^\tr_y| < 2$ \textbf{or} $|\calZ^\val_y| < 2$} 
                    \STATE \texttt{skip} class $y$; retain original weights \texttt{(insufficient samples)}
                    \STATE \textbf{continue}
                \ENDIF
            \FOR{t = 1 to T}
                \STATE \texttt{compute} gradients $\nabla_{\calW_t^y} \widehat J$ from $\calZ^\tr_y$ \& $\calZ^\val_y$
                \STATE \texttt{update} $\calW_{t+1}^y \gets \calW_t^y - \eta_t \nabla_{\calW_t^y} \widehat J$
                \STATE \texttt{project} $\calW_{t+1}^y$ onto $Q$
                \ENDFOR
            \STATE \texttt{multiply} all $w_i\in\calW_{T}^y$ by $r_y^*$
            \ENDFOR
            \IF{\texttt{all classes skipped}}
                \STATE \texttt{set} $\mathcal{W}_T \gets \mathbf{0}$ for $\Str$ \texttt{(fallback)}
            \ELSE
                \STATE \texttt{concatenate} all class-wise indices $\{\mathcal{I}_{\Str, y}\}_{y=1}^C$ and weights $\{\calW_T^y\}_{y=1}^C$
                \STATE \texttt{update} weights in $\calW$: $\calW[\mathcal{I}_{\Str}^{\mathrm{cat}}] \gets \calW_T^{\mathrm{cat}}$
                \STATE \texttt{reorder} $\calW_T^{\mathrm{cat}}$ to match $\mathcal{I}_{\Str}$, yielding $\mathcal{W}_T$
            \ENDIF
            \STATE \texttt{weight} the empirical risk $\widehat{R}(\mf)$ by $\mathcal{W}_T$ 
            \STATE \texttt{backward} $\widehat{R}(\mf)$ and \texttt{update} $\theta$
        \end{algorithmic}
    \end{minipage}
\end{algorithm}

\section{Supplementary Experimental Setups}
\label{sec:appendix_setup}
This section introduces additional experimental details, including descriptions of the datasets and base models used in the main paper, and the specific hyperparameter settings for the class-prior shift, label-noise, and ablation study experiments.

Since computational efficiency is a key aspect of our contribution,  
we evaluated the methods not only in terms of predictive performance but also efficiency.
Specifically, {we assessed the computational efficiency at two levels.}
First, we measured the average end-to-end training time per epoch using \texttt{time.perf\_counter()} from Python’s \texttt{time} module, 
with \texttt{torch.cuda.synchronize()} invoked beforehand to ensure all GPU operations had completed.
Second, we provided a fine-grained breakdown of CPU and GPU runtimes across key algorithmic stages,
presented as an ablation study in Section~\ref{sec:ab_profiling} and in Appendix~\ref{app:ab}.
Experiments on ImageNet-1K were run on NVIDIA A100 GPUs, while others were run on NVIDIA Tesla V100 GPUs.
All experiments were implemented using PyTorch 2.5.0. 

\subsection{Datasets and Base Models}
\textbf{Datasets.} 
The detailed information of the datasets used in the paper is listed below:
\begin{itemize}
    \item \emph{Fashion-MNIST} \citep{xiao2017} is a benchmark dataset for the 10-class image classification task. It consists of 70,000 grayscale images of fashion products across 10 categories, each with a resolution of 28*28 pixels. The dataset is divided into 60,000 training and 10,000 test samples. See \url{https://github.com/zalandoresearch/fashion-mnist} for details.
    \item \emph{CIFAR-10} and \emph{CIFAR-100} \citep{krizhevsky2009learning} are benchmark datasets for image classification of real-world objects. Each dataset consists of 60,000 color images of size 32*32 pixels, split into 50,000 for training and 10,000 for testing. CIFAR-10 contains 10 object categories, while CIFAR-100 has 100 fine-grained classes. See \url{https://www.cs.toronto.edu/~kriz/cifar.html} for details. 
    \item \emph{ImageNet-1K}, short for the \emph{ImageNet Large
    Scale Visual Recognition Challenge 2012 (ILSVRC2012)} \citep{russakovsky2015imagenet}, is a large-scale image dataset of real-world objects grouped into 1000 categories. It contains 1,281,167 training data, 50,000 validation data, and 100,000 test data. Each image is a high-resolution color photograph of real-world scenes, providing rich visual diversity and fine-grained semantic coverage. See \url{https://www.image-net.org} for details. 
\end{itemize}

\textbf{Base Models.}
In this work, we adopted LeNet-5 \cite{lecun1998gradient} as the base model for Fashion-MNIST, whose architecture is as follows:
\begin{itemize}[leftmargin=10em]
    \item[0th (input) layer:] (32*32)-
    \item[1st to 2nd layer:] C(5*5,6)-S(2*2)-
    \item[3rd to 4th layer:] C(5*5,16)-S(2*2)-
    \item[5th layer:] FC(120)-
    \item[6th layer:] FC(84)-10,
\end{itemize}
where C(5*5,6) denotes a convolutional layer with 6 filters of size 5*5 followed by a ReLU activation, S(2*2) represents a max-pooling layer with a 2*2 filter and stride 2, and FC(120) indicates a fully connected layer with 120 output units.

The model used for ImageNet-1K was ResNet-18 \citep{he2016deep}, a residual convolutional neural network consisting of 18 layers organized into four residual stages:
\begin{itemize}[leftmargin=10em]
    \item[0th (input) layer:] (224*224*3)-
    \item[1st to 5th layers:] C(7*7, 64)-S(3*3)-[C(3*3, 64), C(3*3, 64)]*2-
    \item[6th to 9th layers:] [C(3*3, 128), C(3*3, 128)]*2-
    \item[10th to 13th layers:] [C(3*3, 256), C(3*3, 256)]*2-
    \item[14th to 17th layers:] [C(3*3, 512), C(3*3, 512)]*2-
    \item[18th layer:] Global Average Pooling-FC(1000),
\end{itemize}
where C(7*7, 64) denotes a convolution with 64 filters of size 7*7 followed by batch normalization and ReLU activation, S(3*3) represents a max-pooling layer with a 3*3 filter and stride 2, [·, ·]*2 indicates two consecutive residual blocks, each consisting of two 3*3 convolutions followed by batch normalization, with ReLU applied after the first convolution and after the residual addition, and FC(1000) denotes a fully connected layer with 1000 output units.

The model used for CIFAR-10 and CIFAR-100 was a CIFAR-style ResNet-18 variant. Unlike the ImageNet-1K version, it replaces the initial 7*7 convolution and max-pooling layers with a single 3*3 convolution of stride 1.
The remaining structure follows the same four residual stages and concludes with a global average pooling layer followed by a fully connected layer producing 10 or 100 outputs for classification.

\subsection{Class-prior-shift Experiments}
\label{app:cp-exp}
Here we present the hyperparameters used in the class-prior-shift experiments.
For class-prior-shift experiments, the initial learning rate was 0.1 and decayed by a factor of 0.1 every 100 epochs, for a total of 400 epochs. 
The learning rates for weight estimation were 0.0001 for ADIW-KM, 0.005 for ADIW-KL, 0.01 for ADIW-LS, and 0.0005 for ADIW-WA.
The kernel width was 0.1 for DIW and ADIW-KM, and was 5 for ADIW-KL, and 1 for ADIW-LS.

Across all experiments, including the class-prior-shift and the label-noise experiments, the batch size was fixed at 256, and the number of weight estimation iterations was set to 1. For ADIW-LS, the regularization parameter $\lambda$ was set to 1e-05.
For ADIW-WA, the gradient penalty coefficient $\kappa$ was set to 10. The critic $\Phi_\nu$ was trained using the Adam optimizer with a learning rate of 1e-04. The first 50 minibatches were used for pre-training, and each minibatch was used for 3 critic updates.

\subsection{Label-noise Experiments}
\label{app:ln-exp}
In the Fashion-MNIST experiments, the initial learning rate was set to 0.1 and decayed by a factor of 0.1 every 100 epochs, for a total of 400 epochs. The weight decay was fixed at 1e-07. 
The learning rates for weight estimation were 0.1 for ADIW-KM, 0.5 for ADIW-KL, 0.01 for ADIW-LS, and 0.01 for ADIW-WA.

In the CIFAR-10/100 experiments, the initial learning rate was 0.0005, decaying by a factor of 0.1 every 100 epochs, for a total of 400 epochs.
The weight decay was set to 0.0001 for CIFAR-10 and 0.0005 for CIFAR-100. For CIFAR-10, the weight estimation learning rates were 0.1 for ADIW-KM, ADIW-KL, and ADIW-LS, and 0.01 for ADIW-WA. For CIFAR-100, the rates were 0.5 for ADIW-KM, 0.1 for ADIW-KL, 0.1 for ADIW-LS, and 0.005 for ADIW-WA.

In the ImageNet-1K experiments, the initial learning rate was set to 0.0002 and decayed by a factor of 0.1 every 100 epochs, for a total of 300 epochs. The weight decay was 1e-05. The learning rates for weight estimation were 0.5 for ADIW-KM, ADIW-KL, and ADIW-LS, and 0.003 for ADIW-WA.

For all experiments under the label noise, the kernel width was 0.01 for DIW, ADIW-KM, and ADIW-LS, while for ADIW-KL it was set to 0.1 on CIFAR-100 and 0.5 on the other datasets. 
Other hyperparameters shared across both the class-prior shift and label-noise experiments are provided in Appendix~\ref{app:cp-exp}.

\subsection{Ablation Study}
\label{app:ab}
First we present the configuration of the PyTorch Profiler~\footnote{See \url{https://pytorch.org/docs/stable/profiler.html} for PyTorch Profiler.}. The profiler was configured with a window size of 10 iterations and activated during the selected epochs. 
In each activated epoch, the profiler skipped the first 10 iterations as a warm-up, used the following 2 iterations for stabilization, and then recorded results over the subsequent 10 iterations.
The reported results were averaged across all profiling windows.

Then, we provide the hyperparameter configurations used in the CIFAR-10 experiments under 0.4 symmetric label noise for ADIW with the hidden-layer-output transformation, i.e., the ADIW-F variants.
The kernel width was set to 0.5 for ADIW-KM-F, and 0.3 for both ADIW-KL-F and ADIW-LS-F.
The regularization coefficient $\lambda$ for ADIW-LS-F was 0.001.
All other setups were identical to those used in the ADIW-L experiments (see the configurations for CIFAR-10 in Appendix~\ref{app:ln-exp}).

\section{Supplementary Experimental Results}
\label{sec:appendix_results}
In this section, we present additional experimental results, including a profiling analysis on CIFAR-10, a summary of the classification accuracy of the compared methods, statistical plots of the importance weight distributions, and a summary of the end-to-end training time.

\subsection{Key-stage Profiling on CIFAR-10}
\label{sec:profiling_c10}

\begin{table}
\caption{Stage-wise breakdown of total CPU and CUDA time on CIFAR-10 (seconds; percentages in parentheses).}
\label{tab:profiling_c10}
\resizebox{\textwidth}{!}{%
\begin{tabular}{l|ll|ll|ll|ll}
    \toprule
    \multirow{2}{*}{Stage} & \multicolumn{2}{c|}{DIW} & \multicolumn{2}{c|}{DIW\_Val} & \multicolumn{2}{c|}{ADIW\_CPU} & \multicolumn{2}{c}{ADIW\_GPU}\\
    \cmidrule(lr){2-3} \cmidrule(lr){4-5} \cmidrule(lr){6-7} \cmidrule(lr){8-9}
    & CPU (\%) & CUDA (\%) & CPU (\%) & CUDA (\%) & CPU (\%) & CUDA (\%) & CPU (\%) & CUDA (\%) \\
    \midrule
    Fetch tr data & 1.37 (21.78) & 0.01 (0.23) & 1.44 (21.78) & 0.01 (0.22) & 1.14 (19.16) & 0.01 (0.24) & 1.60 (47.31) & 0.01 (1.04) \\
    Fetch val data &  1.26 (20.07) & 0.01 (0.23) & 1.36 (20.58) & 0.02 (0.39) & 1.08 (18.07) & 0.01 (0.23) & 1.46 (43.34) & 0.01 (1.00) \\
    Forward tr data &  0.05	(0.75) & 0.34 (7.88) & 0.04 (0.68) & 0.33 (7.59) & 0.04 (0.67) & 0.34 (7.61) & 0.05 (1.45) & 0.34 (31.83) \\
    Forward val data &  0.05 (0.87) & 0.34 (8.02) & 0.05 (0.82) & 0.33 (7.56) & 0.04 (0.67) & 0.33 (7.48) & 0.06 (1.65) & 0.34 (31.75) \\
    Get tr loss & 0.00 (0.06) & 0.00 (0.01) & 0.01 (0.08) & 0.00 (0.01) & 0.00 (0.06) & 0.00 (0.01) & 0.00 (0.11) & 0.00 (0.02) \\
    Get val loss & 0.00 (0.06) & 0.00 (0.01) & 0.00 (0.06) & 0.00 (0.01) & 0.00 (0.06) & 0.00 (0.01) & 0.01 (0.18) & 0.00 (0.04) \\
    \rowcolor{lightgray}
    Estimate weights & 3.33 (53.04) & 3.21 (75.35) & 3.47 (52.54) &	3.36 (76.18) & 3.49 (58.60) & 3.37 (76.34) & 0.02 (0.47) & 0.01 (0.48) \\
    Weight tr loss & 0.06 (0.97) & 0.34 (7.97) & 0.08 (1.14) & 0.34 (7.75) & 0.06 (0.95) & 0.34 (7.79) & 0.07 (2.08) & 0.34 (32.58) \\
    Backward data & 0.13 (2.00) & 0.00 (0.00) & 0.13 (1.93) & 0.00 (0.00) & 0.08 (1.35) & 0.00 (0.00) & 0.09 (2.71) & 0.00 (0.01) \\
    Update model & 0.02 (0.39) & 0.01 (0.31) & 0.03 (0.39) & 0.01 (0.30) & 0.02 (0.39) & 0.01 (0.30) &  0.02 (0.71) &	0.01 (1.25) \\
    \midrule
    Total & 6.28 (100.00) & 4.26 (100.00) & 6.61 (100.00) &	4.41 (100.00) & 5.96 (100.00) & 4.41 (100.00) & 3.38 (100.00) & 1.06 (100.00) \\
    \bottomrule
\end{tabular}}

\vspace{2pt}

\footnotesize
\raggedright
The results are averaged over 5 profiling windows at epochs 20, 105, 205, 305 and 390. Each profiling window consists of 10 iterations measured by PyTorch Profiler. ``tr'' and ``val'' denote training and validation data. The ``Estimate weights'' stage is highlighted as the primary target for efficiency improvement.

\end{table}

Table~\ref{tab:profiling_c10} reports stage-wise profiling results on CIFAR-10 under 0.4 symmetric label noise. Unlike the ImageNet-1K case in the main paper, DIW, DIW\_Val, and ADIW\_CPU exhibited similar runtimes, likely because CIFAR-10 was small enough to fit entirely in memory, making CPU-GPU transfers and solver overhead less significant.
Nevertheless, the proposed method (i.e., ADIW\_GPU) achieved a substantial efficiency gain, reducing CPU time from 6.28 s to 3.38 s (about 46\% reduction) and CUDA time from 4.26 s to 1.06 s (about 75\% reduction) compared with DIW, with weight estimation contributing less than 0.5\% on both. These results indicated that ADIW’s efficiency gains were most significant at large scale but still offered clear advantages on smaller datasets.





\begin{table}
\caption{Mean accuracy (standard deviation) in percentage on Fashion-MNIST, CIFAR-10/-100 over the last ten epochs under label noise (3 trials). 
}
\label{tab:ADIW_acc}
\resizebox{\textwidth}{!}{%
\begin{tabular}{c|c|cccccccccc}
    \toprule
    Data & Noise & Val-Only & Uniform & Random & L2R & MW-Net & DIW & ADIW-KM & ADIW-KL & ADIW-LS & ADIW-WA \\
    \midrule
    \multirow{6}{*}{\rotatebox{90}{\textit{F-MNIST}}} & 0.3 p & \makecell{79.29 \\ (0.86)} & \makecell{65.11 \\ (0.23)} & \makecell{71.15 \\ (0.53)} & \makecell{\textbf{88.74} \\ \textbf{(0.47)}} & \makecell{\textbf{86.56} \\ \textbf{(1.79)}} & \makecell{\textbf{89.99} \\ \textbf{(0.17)}} & \makecell{\textbf{88.73} \\ \textbf{(0.66)}} & \makecell{\textbf{89.57} \\ \textbf{(0.18)}} & \makecell{\textbf{90.00} \\ \textbf{(0.11)}} & \makecell{\textbf{87.21} \\ \textbf{(0.36)}} \\
    & 0.4 s & \makecell{79.29 \\ (0.86)} & \makecell{62.88 \\ (0.61)} & \makecell{73.49 \\ (0.60)} & \makecell{84.85 \\ (0.13)} & \makecell{\textbf{85.69} \\ \textbf{(1.03)}} & \makecell{\textbf{89.64} \\ \textbf{(0.04)}} & \makecell{\textbf{88.94} \\ \textbf{(0.14)}} & \makecell{\textbf{89.19} \\ \textbf{(0.11)}} & \makecell{\textbf{89.39} \\ \textbf{(0.09)}} & \makecell{\textbf{89.06} \\ \textbf{(0.35)}} \\
    & 0.5 s & \makecell{79.29 \\ (0.86)} & \makecell{55.43 \\ (0.58)} & \makecell{68.26 \\ (1.12)} & \makecell{\textbf{83.91} \\ \textbf{(0.66)}} & \makecell{82.87 \\ (0.73)} & \makecell{\textbf{89.05} \\ \textbf{(0.28)}} & \makecell{\textbf{87.66} \\ \textbf{(0.49)}} & \makecell{\textbf{88.80} \\ \textbf{(0.25)}} & \makecell{\textbf{88.55} \\ \textbf{(0.16)}} & \makecell{\textbf{87.60} \\ \textbf{(0.05)}} \\
    \midrule
    \multirow{6}{*}{\rotatebox{90}{\textit{CIFAR-10}}} & 0.3 p & \makecell{48.09 \\ (0.27)} & \makecell{67.84 \\ (0.55)} & \makecell{\textbf{88.06} \\ \textbf{(0.28)}} & \makecell{\textbf{89.45} \\ \textbf{(0.08)}} & \makecell{79.31 \\ (0.76)} & \makecell{88.00 \\ (0.14)} & \makecell{\textbf{89.16} \\ \textbf{(0.15)}} & \makecell{\textbf{88.71} \\ \textbf{(0.27)}} & \makecell{87.86 \\ (0.13)} & \makecell{\textbf{88.20} \\ \textbf{(0.30)}} \\
    & 0.4 s & \makecell{48.09 \\ (0.27)} & \makecell{62.92 \\ (0.34)} & \makecell{82.74 \\ (0.21)} & \makecell{82.39 \\ (0.50)} & \makecell{\textbf{68.93} \\ \textbf{(5.43)}} & \makecell{\textbf{85.38} \\ \textbf{(0.45)}} & \makecell{\textbf{86.70} \\ \textbf{(0.06)}} & \makecell{\textbf{87.05} \\ \textbf{(0.18)}} & \makecell{\textbf{84.38} \\ \textbf{(0.68)}} & \makecell{\textbf{86.27} \\ \textbf{(0.06)}} \\
    & 0.5 s & \makecell{48.09 \\ (0.27)} & \makecell{57.09 \\ (1.30)} & \makecell{79.07 \\ (0.09)} & \makecell{78.61 \\ (0.35)} & \makecell{62.65 \\ (1.55)} & \makecell{81.31 \\ (0.30)} & \makecell{\textbf{80.64} \\ \textbf{(0.67)}} & \makecell{\textbf{83.20} \\ \textbf{(0.20)}} & \makecell{\textbf{79.26} \\ \textbf{(1.03)}} & \makecell{\textbf{83.41} \\ \textbf{(0.12)}} \\
    \midrule
    \multirow{6}{*}{\rotatebox{90}{\textit{CIFAR-100}}} & 0.3 p & \makecell{12.18 \\ (0.28)} & \makecell{50.34 \\ (0.21)} & \makecell{55.75 \\ (0.29)} & \makecell{\textbf{61.42} \\ \textbf{(1.27)}} & \makecell{50.32 \\ (1.09)} & \makecell{61.93 \\ (0.44)} & \makecell{\textbf{63.88} \\ \textbf{(0.63)}} & \makecell{\textbf{66.39} \\ \textbf{(0.15)}} & \makecell{52.54 \\ (0.35)} & \makecell{57.19 \\ (0.29)} \\
    & 0.4 s & \makecell{12.18 \\ (0.28)} & \makecell{38.65 \\ (0.21)} & \makecell{53.63 \\ (0.90)} & \makecell{56.05 \\ (0.57)} & \makecell{38.36 \\ (0.13)} & \makecell{61.39 \\ (0.28)} & \makecell{\textbf{62.26} \\ \textbf{(0.30)}} & \makecell{\textbf{63.42} \\ \textbf{(0.43)}} & \makecell{59.24 \\ (0.59)} & \makecell{\textbf{62.38} \\ \textbf{(0.21)}} \\
    & 0.5 s & \makecell{12.18 \\ (0.28)} & \makecell{31.06 \\ (0.27)} & \makecell{48.50 \\ (0.43)} & \makecell{52.00 \\ (0.89)} & \makecell{30.39 \\ (0.36)} & \makecell{\textbf{57.35} \\ \textbf{(0.37)}} & \makecell{\textbf{56.02} \\ \textbf{(0.62)}} & \makecell{\textbf{59.09} \\ \textbf{(0.40)}} & \makecell{52.29 \\ (0.62)} & \makecell{\textbf{57.13} \\ \textbf{(0.42)}} \\
    \bottomrule
\end{tabular}}
\vspace{2pt}

\footnotesize
\raggedright
Best and comparable methods (paired \textit{t}-test at significance level 1\%) are highlighted in bold. 
``p'' and ``s'' denote pair and symmetric label noise, respectively.

\end{table}

\subsection{Summary of Classification Accuracy}
\label{sec:app_acc}
Table~\ref{tab:ADIW_acc} further summarizes the mean accuracy (standard deviation) on Fashion-MNIST, CIFAR-10, and CIFAR-100 over the last ten epochs under label noise, compared with all baseline methods.
According to paired \textit{t}-tests, most ADIW variants achieve accuracy comparable to or better than DIW. Notable exceptions include ADIW-LS on CIFAR-10 with 0.3 pair label noise and on CIFAR-100, as well as ADIW-WA on CIFAR-100 with 0.3 pair label noise.
Overall, ADIW-KM and ADIW-KL demonstrate consistently strong performance across all settings and are therefore recommended in practice.

\subsection{Importance Weight Distributions}
\label{app:w_dist_full}
Figure~\ref{fig:ADIW_wdist_c10_sota_full} presents the weight distributions learned by DIW and all ADIW variants.
Overall, both DIW and ADIW effectively assign higher weights to intact samples and lower weights to mislabeled ones.
Among ADIW variants, ADIW-KL yields the most stable weighting, with weights concentrated around the median for both intact and mislabeled data, whereas other variants, especially ADIW-KM and ADIW-WA, produce broader distributions.

\begin{figure}[t]
    \centering
    \begin{minipage}{\columnwidth}
    \begin{minipage}[c]{0.195\columnwidth}\centering\small DIW \end{minipage}%
    \begin{minipage}[c]{0.195\columnwidth}\centering\small ADIW-KM \end{minipage} \hspace{0.2em}%
    \begin{minipage}[c]{0.195\columnwidth}\centering\small ADIW-KL \end{minipage}%
    \begin{minipage}[c]{0.195\columnwidth}\centering\small ADIW-LS \end{minipage}%
    \begin{minipage}[c]{0.195\columnwidth}\centering\small ADIW-WA \end{minipage}\\
    \begin{minipage}[c]{\columnwidth}
        \includegraphics[width=0.193\columnwidth]{fig/w_box_c10_04s_diw.pdf}
        \includegraphics[width=0.193\columnwidth]{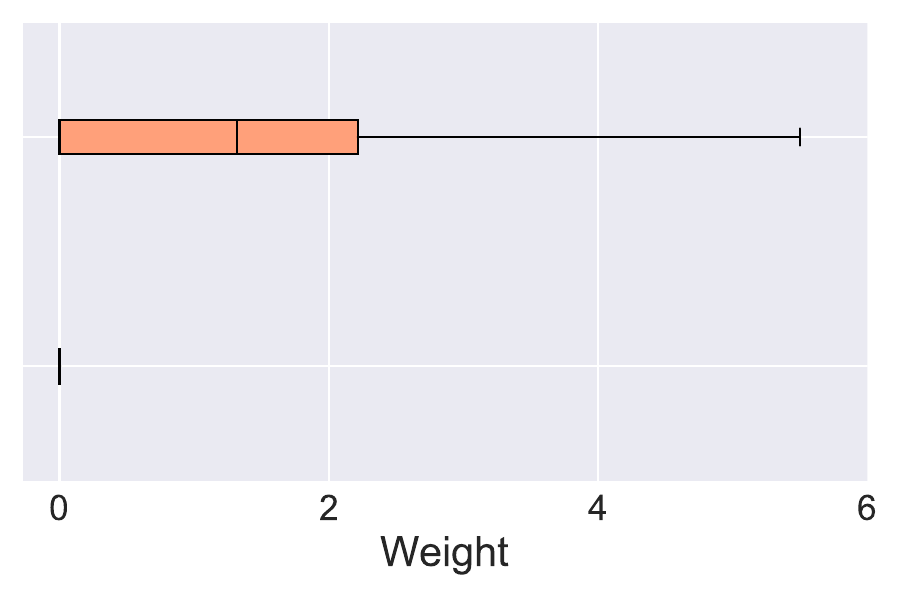}
        \includegraphics[width=0.193\columnwidth]{fig/w_box_c10_04s_kl.pdf}
        \includegraphics[width=0.193\columnwidth]{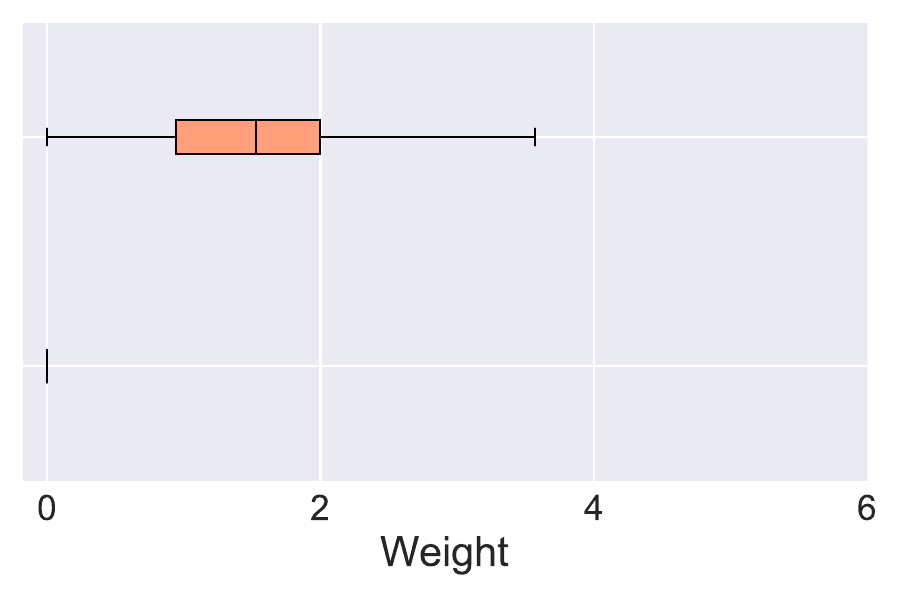}
        \includegraphics[width=0.193\columnwidth]{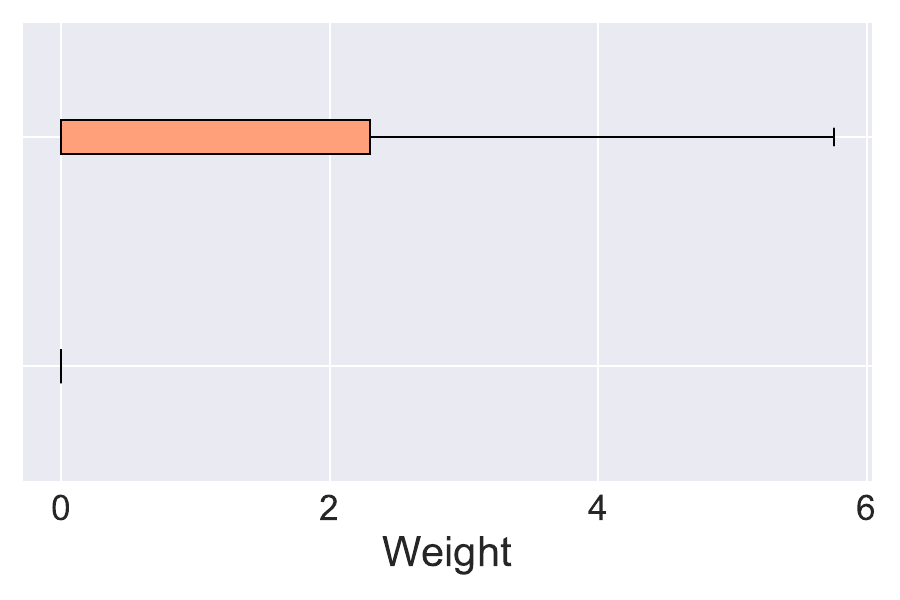}
    \end{minipage}\\
    \begin{minipage}[c]{\columnwidth}
        \includegraphics[width=0.193\columnwidth]{fig/w_dist_c10_04s_diw.pdf}
        \includegraphics[width=0.193\columnwidth]{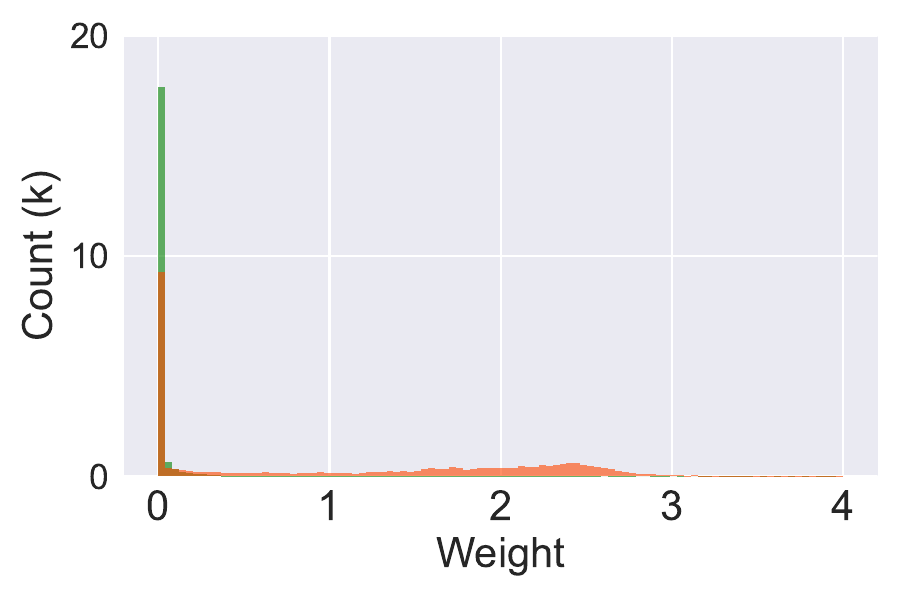}
        \includegraphics[width=0.193\columnwidth]{fig/w_dist_c10_04s_kl.pdf}
        \includegraphics[width=0.193\columnwidth]{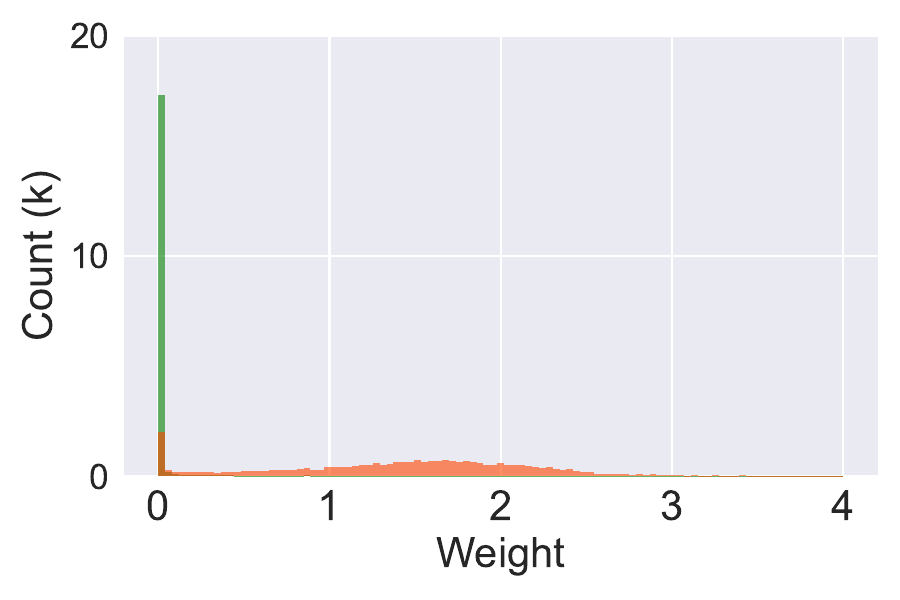}
        \includegraphics[width=0.193\columnwidth]{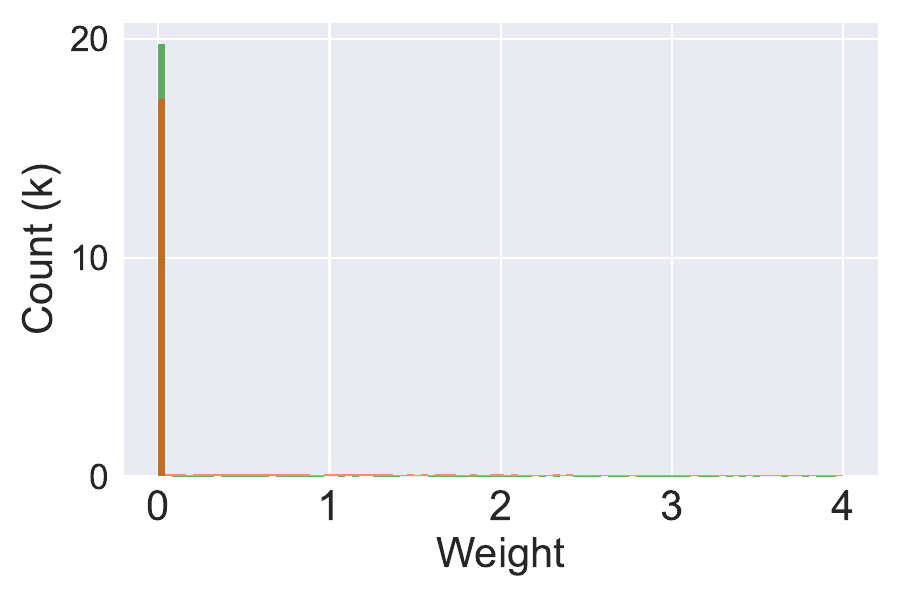}
    \end{minipage}\\
    \caption{Statistics of weight distributions on CIFAR-10 under 0.4 symmetric label noise (top: box plots; bottom: histogram plots).}
    \label{fig:ADIW_wdist_c10_sota_full}
\end{minipage}
\end{figure}

\subsection{Summary of End-to-end Training Time}
\label{app:time}
Table~\ref{tab:efficiency} summarizes the end-to-end per-epoch training time under 0.3 pair, 0.4 symmetric, and 0.5 symmetric label noise settings. 
Figure~\ref{fig:ADIW_time} in the main text presents the 0.4 symmetric case as a representative setting, while this table confirms that similar efficiency trends hold across all label noise configurations.

\begin{table}[t]
\caption{End-to-end training time per epoch$^*$ in seconds on Fashion-MNIST (F-MNIST), CIFAR-10, CIFAR-100, and ImageNet-1K. }
\resizebox{\textwidth}{!}{%
\label{tab:efficiency}
\begin{tabular}{c|c|cccccccc}
    \toprule
    Data & Noise & Uniform & L2R & MW-Net & DIW & ADIW-KM & ADIW-KL & ADIW-LS & ADIW-WA \\
    \midrule
    \multirow{3}{*}{\textit{F-MNIST}} & 0.3 p & 8.13 & 23.85 & 18.87 & 77.25 & 29.63 & 16.07 & 21.09 & 19.48 \\
    & 0.4 s & 8.02 & 23.15 & 18.93 & 80.10 & 29.81 & 21.80 & 24.10 & 19.92 \\
    & 0.5 s & 8.19 & 23.97 & 21.54 & 76.39 & 28.08 & 21.64 & 31.85 & 19.69 \\
    \midrule
    \multirow{3}{*}{\textit{CIFAR-10}} & 0.3 p & 17.85 & 119.66 & 134.40 & 90.40 & 55.04 & 38.48 & 69.59 & 39.59 \\
    & 0.4 s & 17.90 & 121.38 & 152.12 & 109.55 & 49.71 & 38.82 & 67.94 & 39.66 \\
    & 0.5 s & 17.90 & 121.07 & 135.15 & 111.01 & 38.73 & 38.61 & 68.53 & 39.79 \\
    \midrule
    \multirow{3}{*}{\textit{CIFAR-100}} & 0.3 p & 18.49 & 120.80 & 133.67 & 89.60 & 67.24 & 38.74 & 55.60 & 40.00 \\
    & 0.4 s & 18.64 & 120.68 & 153.93 & 110.65 & 67.15 & 38.56 & 50.19 & 39.87 \\
    & 0.5 s & 18.54 & 118.98 & 135.10 & 114.80 & 68.61 & 38.54 & 38.56 & 39.75 \\
    \midrule
    \multirow{1}{*}{\textit{ImageNet-1K}} & 0.4 s & 746.99 & 18917.64 & 20087.89 & 20498.98 & 984.92 & 981.36 & 981.26 & 984.90 \\
    \bottomrule
\end{tabular}}

\vspace{2pt}
\parbox{\columnwidth}{\footnotesize
* Averaged over the full training schedule with the first epoch excluded (300 epochs for ImageNet-1K, 400 epochs for others). L2R, MW-Net, and DIW were run for only four epochs due to prohibitive cost. ``p'' and ``s'' denote pair and symmetric label noise, respectively.
}
\end{table}



\end{document}